\def\eqref#1{equation~\ref{#1}}
\def\1{\bm{1}}
\DeclareMathAlphabet{\mathsfit}{\encodingdefault}{\sfdefault}{m}{sl}
\SetMathAlphabet{\mathsfit}{bold}{\encodingdefault}{\sfdefault}{bx}{n}
\newcommand\TODO[1][]{{\color{orange}[TODO\ifthenelse{\equal{#1}{}}{}{: #1}]}}
\newcommand\SEC\section
\newcommand\SSEC\subsection
\newcommand\SSSEC\subsubsection
\definecolor{darkgreen}{rgb}{0.0, 0.5, 0.0} 
\renewcommand{\eqref}[1]{Eq.~\ref{#1}}
\definecolor{icmlblue}{RGB}{0,90,156}
\newcommand{\circnum}[1]{%
  \tikz[baseline=(char.base)]{
    \node[shape=circle,fill=icmlblue,inner sep=.6pt] (char)
    {\textcolor{white}{#1}};}
}
\theoremstyle{plain}
\newtheorem{theorem}{Theorem}[section]
\newtheorem{lemma}[theorem]{Lemma}
\theoremstyle{definition}
\theoremstyle{remark}
\newtheorem{remark_coral}[theorem]{Remark}
\definecolor{impr}{RGB}{34, 139, 34}
\definecolor{lightred}{RGB}{255, 230, 230}
\definecolor{darkred}{RGB}{192, 0, 0}
\definecolor{bestbg}{HTML}{FFF2B2}   %
\definecolor{secondbg}{HTML}{E8F0FE} %
\definecolor{LavenderLight}{HTML}{C7C3F5}
\definecolor{LightCoral}{RGB}{240,128,128}
\definecolor{LightBlue}{RGB}{173,216,230}
\newcommand{\our}{AutoTool\xspace}
\icmltitlerunning{\our: Dynamic Tool Selection and Integration for Agentic Reasoning}
\title{
\our: Dynamic Tool Selection and Integration for Agentic Reasoning
}
\begin{document}

\twocolumn[
  \icmltitle{\our: Dynamic Tool Selection and Integration for Agentic Reasoning}

  \icmlsetsymbol{equal}{*}
  \icmlsetsymbol{corr}{$\dagger$}

\begin{icmlauthorlist}
  \icmlauthor{Jiaru Zou}{equal,princeton,uiuc}
  \icmlauthor{Ling Yang}{equal,princeton,corr}
  \icmlauthor{Yunzhe Qi}{uiuc}
  \icmlauthor{Sirui Chen}{uiuc}
  \icmlauthor{Mengting Ai}{uiuc}
  \icmlauthor{Ke Shen}{}
  \icmlauthor{Jingrui He}{uiuc,corr}
  \icmlauthor{Mengdi Wang}{princeton,corr}
\end{icmlauthorlist}

\vspace{6pt}
\begin{center}
{\fontsize{10pt}{10pt}\selectfont
\raisebox{-0.1em}{\includegraphics[height=1em]{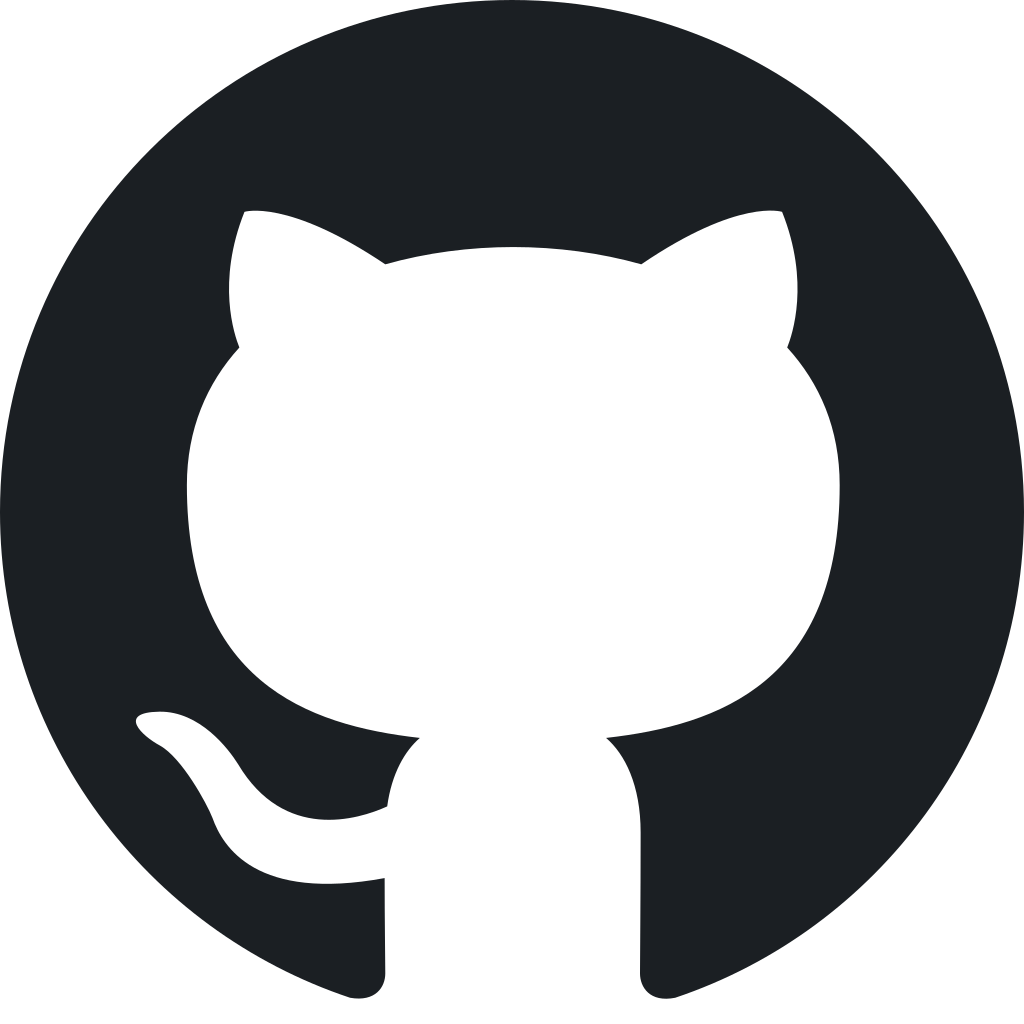}}
Project Page: \href{https://github.com/Gen-Verse/Open-AgentRL}{https://github.com/Gen-Verse/Open-AgentRL}
}
\end{center}
\vspace{-11pt}

\icmlaffiliation{princeton}{Princeton University}
\icmlaffiliation{uiuc}{University of Illinois Urbana-Champaign}
\icmlcorrespondingauthor{Ling Yang}{ly1988@princeton.edu}
\icmlcorrespondingauthor{Jingrui He}{jingrui@illinois.edu}
\icmlcorrespondingauthor{Mengdi Wang}{mengdiw@princeton.edu}

\icmlkeywords{Machine Learning, ICML, LLM Agents, Tool Selection, Agentic Reasoning}

\vskip 0.3in
]

\printAffiliationsAndNotice{\icmlEqualContribution}

\vspace{-20pt}
\begin{abstract}
Agentic reinforcement learning has advanced large language models (LLMs) to reason through long chain-of-thought trajectories while interleaving external tool use. Existing approaches assume a fixed inventory of tools, which limits the adaptability of LLM agents to new or evolving toolsets. 
We present \textbf{AutoTool}, a training framework that equips LLM agents with dynamic tool-selection capabilities throughout their reasoning trajectories.
AutoTool employs a dual-phase optimization pipeline: (i) SFT and RL-based trajectory stabilization for coherent reasoning, and (ii) KL-regularized Plackett–Luce Ranking to refine consistent multi-step tool selection.
We further build a 200k dataset with explicit tool-selection rationales across 1,000+ tools and 100+ tasks spanning mathematics, science, code generation, and multimodal reasoning. 
Across ten diverse benchmarks, we train two base models, Qwen3-8B and Qwen2.5-VL-7B, with AutoTool. With fewer parameters, AutoTool consistently outperforms advanced LLM agents and tool-integration methods, yielding average gains of 6.4\% in math \& science reasoning, 4.5\% in search-based QA, 7.7\% in code generation, and 6.9\% in multimodal understanding. In addition, AutoTool exhibits stronger generalization by dynamically leveraging unseen tools from evolving toolsets during inference.

\end{abstract}
    
\addtocontents{toc}{\protect\setcounter{tocdepth}{-1}}
\vspace{-10pt}
\section{Introduction}
\label{sec:intro}

Recent advances in reinforcement learning (RL) \citep{shao2024deepseekmath,yu2025dapo} have improved the reasoning capabilities of large language models (LLMs) \citep{lu2025arpo, guo2025deepseek,team2025kimi}, enabling them to generate long chain-of-thought (CoT) trajectories for complex tasks across domains such as visual question answering \citep{pramanick2025spiqadatasetmultimodalquestion}, knowledge retrieval \citep{su2025openthinkimg}, and mathematical reasoning \citep{maxwelljia2024aime24}. 
Building on this, a growing line of research on agentic RL \citep{singh2025agentic,qian2025toolrl,chen2025researchlearningreasonsearch} explores how LLMs, beyond purely text-based model internal reasoning, can operate as agents that interleave their reasoning with multi-turn interactions in external tool environments \citep{mai2025agent,wang2025ragen}, such as search engines \citep{kong2023tptu,jin2025searchr1trainingllmsreason}, code interpreters \citep{feng2025retoolreinforcementlearningstrategic}, and vision tools \citep{su2025openthinkimg, peng2025lmm, wu2024v}. 
The tool integration process reduces compounding errors in long CoT trajectories and enables more precise and reliable computation, which in turn strengthens the model’s reasoning and leads to consistent task performance \citep{zhang2023cumulative,feng2025group}.

Despite these advances, existing approaches typically operate under the assumption of \emph{single-domain fixed tools} \citep{gao2025survey}: the policy model learns when and how to invoke the designated tools, but the available tools are predefined and static for a specific task. 
In practice, these methods curate training datasets that explicitly cover the utilization of this fixed tool inventory, and then apply advanced supervised fine-tuning (SFT) or RL techniques to align the model’s behavior with the prescribed tool usage \citep{feng2025retoolreinforcementlearningstrategic, jin2025searchr1trainingllmsreason, zou2026tattoo}. 
While effective within closed environments, such designs fail to capture realistic scenarios as illustrated in Figure \ref{fig:intro}(up), where (i) an agent must select the appropriate tool from a \textit{complex domain-diverse toolset}, and (ii) \textit{new unseen tools} during training stages may later be introduced and required at inference time.
Without an explicit mechanism for dynamic tool selection, LLM agents risk overfitting to a closed tool inventory and failing to generalize in evolving tool environments.
Addressing this gap requires LLM agents to move beyond simple tool-use toward adaptively choosing tools from dynamic toolsets during generation, while preserving strong reasoning ability. To this end, we aim to investigate:

\begin{tcolorbox}[
    enhanced,
    colback=yellow!10!white,      %
    colframe=black,            %
    coltitle=white,            %
    fonttitle=\bfseries,       %
    boxrule=.7pt,
    width=\linewidth,
    top=.1mm,
    bottom=.1mm,
    left=1mm,                  %
    right=1mm,                 %
    before skip=6pt, after skip=6pt,
    attach boxed title to top left={
        yshift=-2mm,
        xshift=2mm
    },
    boxed title style={
        colback=black,
        sharp corners,
        boxrule=0pt,
        top=.5pt, bottom=.5pt, left=4pt, right=4pt
    }，
]
\begin{center}
    \textbf{\raisebox{-0.2\height}{\includegraphics[height=1.3em]{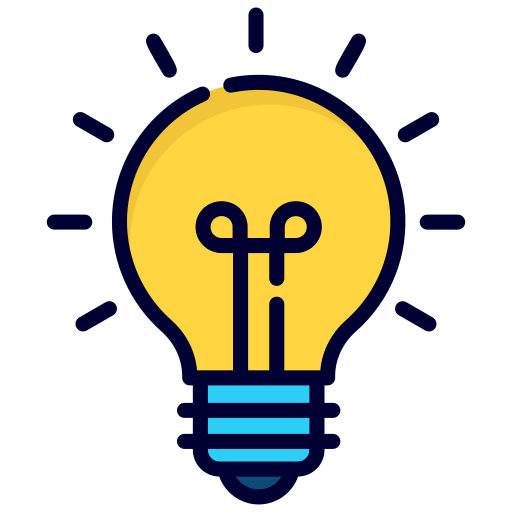}}%
    \ \textit{How can LLM agents dynamically select tools during agentic reasoning?}}
\end{center}

\end{tcolorbox}

\begin{figure*}[!t]
    \centering
    \includegraphics[width=\linewidth]{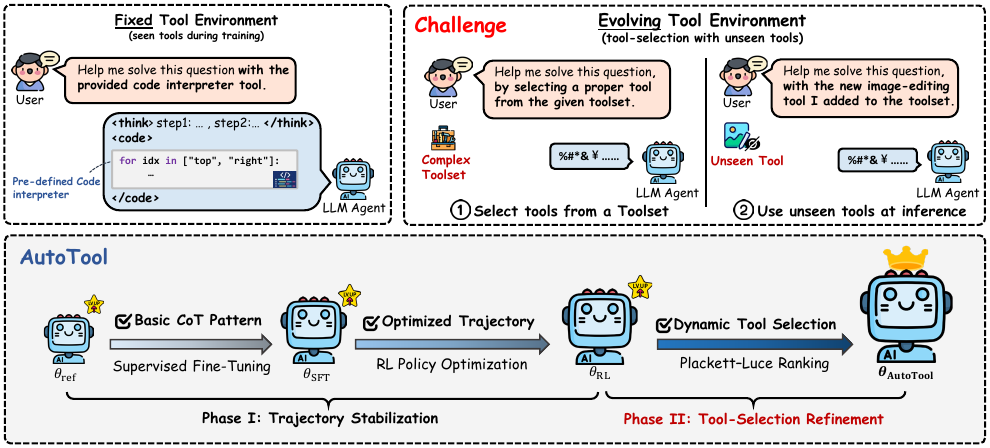}
    \caption{\textbf{Illustration of fixed vs.\ challenging evolving tool environments.} \our{} enables LLM agents to dynamically select from complex unseen toolsets via a dual-phase optimization pipeline.}
    \label{fig:intro}
\end{figure*}

We introduce \textbf{\our}, a new training framework that equips LLM agents with dynamic tool-selection capabilities throughout the reasoning process. 
\our integrates reasoning and tool use into a unified trajectory, where the agent alternates between generating rationales and selecting tools from a large and evolving toolset. As illustrated in Figure \ref{fig:intro} (bottom), we design a \textit{dual-phase optimization pipeline} to train an LLM agent policy. 
In \textit{Phase I}, we initialize the LLM agent with SFT followed by RL policy optimization, equipping the model with stable long CoT reasoning and trajectory generation abilities. 
In \textit{Phase II}, we continue refining the LLM agent with a dedicated focus on tool selection, enabling the model to compose effective sequences of tool choices during its generation.
We draw inspiration from the Plackett--Luce (PL) Ranking \citep{luce1959individual, cheng2010label} and frame tool-selection steps within model trajectories as a sequence ranking problem over the evolving toolset. 
We provide a theoretical bridge connecting the PL Ranking of tool-selection steps with the LLM agent's policy optimization, and then optimize the LLM agent with a KL-regularized objective by minimizing a tractable policy-level Cross-Entropy (CE) loss. The optimization over tool preferences explicitly trains the LLM agent to favor more effective tool compositions over weaker alternatives, leading to effective and generalizable tool-selection strategies.

To complement our method and support dynamic tool environments, we construct a \textit{200k-scale dynamic tool-use dataset}, curated through a three-stage dedicated pipeline that explicitly incorporates tool-selection rationales into LLM agents’ trajectory-level reasoning. Our curated dataset spans a diverse library of over 1,000 tools and more than 100 tasks across mathematics, science, code generation, and multimodal understanding, providing rich supervision for learning to choose tools from a large, evolving toolset.

We conduct extensive experiments across ten diverse benchmarks, training both Qwen3-8B \citep{yang2025qwen3} and Qwen2.5-VL-7B \citep{bai2025qwen25vltechnicalreport} backbones with \our.
Incorporating \our over previous training paradigms, such as SFT and GRPO, further boosts performance by an average of 6.4\% in math \& science, 4.5\% in search, 7.7\% in code generation, and 6.9\% in multimodal understanding. Additionally, while advanced LLM agents or specialized tool-integration methods typically excel within fixed domains, they often fail to transfer their tool-usage abilities to other tasks. In contrast, \our consistently demonstrates strong generalizability across all downstream tasks by leveraging dynamic tool-selection.

\section{\our}
\label{sec:method}
We introduce \our{}, an agentic reasoning framework that equips LLM agents with dynamic tool-selection capabilities during generation. Our framework leverages a dual-phase optimization pipeline: (i) trajectory stabilization through supervised fine-tuning (SFT) and RL-based policy optimization, which endows agents with stable long-form CoT reasoning patterns; and (ii) tool-selection refinement via reward-guided Plackett–Luce (PL) ranking optimization.

\textbf{Method Roadmap.} We begin by introducing the preliminaries and notation that formalize the tool-selection setting within LLM agents' generation process (Section \ref{sec:preliminary}). We then describe how LLM agents interact with an evolving toolset through tool-embedding grounded trajectory generation (Section \ref{section:rollout}). Next, we present the overall pipeline for the dual-phase policy optimization strategy (Section \ref{sec:dual_train}). Finally, we elaborate on how we optimize tool-selection via PL ranking during policy training (Section \ref{sec:pl_ranking}).

\begin{figure*}[!t]
    \centering
    \includegraphics[width=\linewidth]{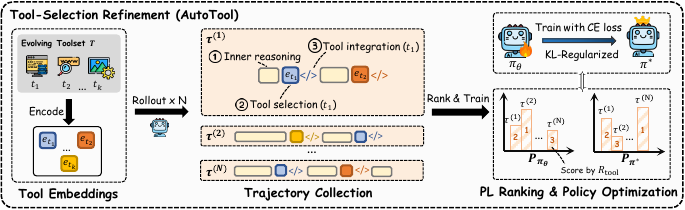}
    \caption{\textbf{Illustration on \our{}'s Tool-Selection Refinement Phase.} We cast tool-selection as a Plackett–Luce (PL) ranking problem during trajectory generation, optimizing the LLM agent to align its policy distribution with reward-consistent tool preferences.
    }
    \label{fig:autor_select}
\end{figure*}

\subsection{Preliminary and Notations}
\label{sec:preliminary}

\textbf{Toolset.} We denote $\pi_\theta$ as an LLM agent, parameterized by $\theta$. In our agentic reasoning setting, an LLM agent is endowed with an \emph{evolving toolset}, i.e., $T = \{t_1, t_2, \cdots, t_{|T|}\},$
where the size $|T|$ is not fixed. Each tool $t_k\in T$ is paired with a feature description $\mu(t_k)$, which serves as its “instruction manual,” specifying the tool's name, functionality, input-output schema, and usage constraints.

\textbf{Agentic Trajectory Generation.} 
We conceptualize a complete reasoning trajectory generated by an LLM agent $\pi_\theta$ as an interplay among three complementary components:
\begin{equation}
\label{eq:trajectory_pattern}
\tau \;=\; \ldots \tau^{\text{reason}}_{i-1} \;\oplus\; \tau^{\text{select}}_{i} \;\oplus\; \tau^{\text{integrate}}_{i+1} \ldots,
\end{equation}
where $\tau$ includes: (i) \emph{inner-reasoning steps $\tau^{\text{reason}}$}, which $\pi_\theta$ produces a CoT format internal reasoning process; 
(ii) \emph{tool-selection steps $\tau^{\text{select}}$}, which $\pi_\theta$ reasons over the toolset $T$ and selects a tool $t_k \in T$; and 
(iii) \emph{tool-integration steps $\tau^{\text{integrate}}$}, which follow immediately after the tool-selection steps and involve calling the selected tool, executing it, and feeding the feedback back into $\pi_\theta$'s middle generation.

\textbf{Per-step Tool Selection.} 
During trajectory generation, we consider the tool-selection step $\tau^{\text{select}}_i$ in particular and formulate $\pi_\theta$'s tool-selection process as a \emph{per-step decision-making problem.} Given the input question $x$, the evolving toolset $T$, and $\pi_\theta$'s previous generated steps $\tau_{<i}$, we model $\pi_\theta$'s generation at step $i$ as selecting $\tau^{\text{select}}_i$ from all potential tool-selection candidates $\{\, \tau^{\text{select}}_k \mid t_k \in T \,\}$ with each $\tau^{\text{select}}_k$ corresponds to the tool $t_k \in T$, i.e.,
\begin{equation}
\label{eq:rank_selection}
\tau^{\text{select}}_i 
= \arg\max_{\tau^{\text{select}}_k} 
\; \pi_\theta \big(\tau^{\text{select}}_k \mid x, \tau_{<i}, T\big).
\end{equation}

\subsection{Tool-Aware Trajectory Generation}
\label{section:rollout}

We first describe in \our{}, how LLM agents interact with the external toolset $T$ and make tool selections during trajectory generation. Our rollout procedure is designed to allow LLM agents to seamlessly interleave reasoning with tool-selection steps for more coherent tool usage.

\textbf{Tool Embedding.}  
To incorporate rich and precise information from the external toolset $T$ into the LLM agent $\pi_\theta$, we first collect the embedding representation of each tool $t_k$ together with its descriptive features $\mu(t_k)$. We leverage the internal embedding layer of $\pi_\theta$ to obtain a contextualized representation for each tool:
\begin{equation}
    \mathbf{e}_{t_k} \;=\; \text{Emb}_{\pi_\theta}\!\left[t_k,\mu(t_k)\right], \quad \text{with } \mathcal{E}_T = \{\mathbf{e}_{t_k}\}^{|T|}_{k=1},
\end{equation}
where $\mathcal{E}_T$ is the tool-embedding set for $T$. The latent embedding for each tool shares the same space of hidden state space as LLM agents, ensuring natural alignment with the LLM agents' generation process.

\textbf{Embedding-Anchored Tool Selection.} For each tool-selection step $\tau^{\text{select}}_i$, we ask $\pi_\theta$ to first generate a selection rationale $s_i$, followed by an explicit \textit{anchor token} with predicted embedding representation $\mathbf{e'}_i$. The tool for $\tau^{\text{select}}_i$ is then selected by sampling from the softmax-normalized distance distribution \citep{dong2015image} between the predicted $\mathbf{e'}_i$ and the candidate tool embeddings $\mathbf{e}_{t_k} \in \mathcal{E}_T$:
\begin{equation}
\label{eq:tool_embed_select}
\begin{aligned}
\pi_\theta(t_k \mid x, \tau_{<i},s_i, T) = 
    \frac{\exp\!\big(-\gamma \,\|\mathbf{e'}_i - \mathbf{e}_{t_k}\|_F^2\big)}
         {\sum_{t_j \in T} \exp\!\big(-\gamma \,\|\mathbf{e'}_i - \mathbf{e}_{t_j}\|_F^2\big)},
\end{aligned}
\end{equation}
where $||\cdot||_F$ is the Frobenius norm and $\gamma>0$ controls the distribution skewness. By sampling from an embedding-based distribution, the LLM agent dynamically selects tools that are most aligned with its previous reasoning context. 
Note that \eqref{eq:tool_embed_select}'s embedding-based selection operates over frozen, pre-computed tool embeddings, and the similarity computation involves only lightweight vector operations on a single hidden-state anchor. As a result, the tool-selection overhead is negligible relative to LLM inference and scales efficiently to toolsets with thousands of tools.

\begin{remark_coral}{Takeaway: Why Embedding-Anchored Selection?}
As the external toolset $T$ evolves, directly generating tool names by the LLM agent risks failure on unseen tools. By anchoring selection in the embedding space, which is generalizable \citep{mikolov2013distributed,ganguly2015word}, the agent can select new tools via \textit{representation alignment}, avoiding reliance on memorized tool identifiers and remaining robust to the \textit{evolving toolsets} environments.
\end{remark_coral}
\vspace{5pt}

After the tool selection step, the predicted tool is invoked and executed, with its outputs and feedback returned to the LLM agent for subsequent trajectory generation.

\subsection{Dual-phase Policy Training Pipeline}
\label{sec:dual_train}

Next, we describe how the LLM agent is trained as the policy model in \our{} through a dual-phase optimization strategy. The overall training flow of $\pi_\theta$ is summarized as:
\[
\underbrace{\theta_{\text{ref}} 
\;\xrightarrow{\text{SFT}}\;
\theta_{\text{SFT}}
\;\xrightarrow{\text{RL}}\;\;}_{\text{Phase I - Trajectory Stabilization}}
\; 
\underbrace{\theta_{\text{RL}}\;\;\xrightarrow{\text{PL Ranking}}\;
\theta_{\text{AutoTool}}}_{\text{Phase II - Tool-Selection Refinement}}
\]

\textbf{Phase I -- Trajectory Stabilization.} 
To enable the LLM agent to follow the reasoning patterns in \eqref{eq:trajectory_pattern} and produce valid tool-integrated trajectories, we initialize from a base reference model $\pi_{\theta_\text{ref}}$ (e.g., an off-the-shelf reasoning model). The LLM agent is then trained using agentic training strategies \citep{dong2025agentic,feng2025retoolreinforcementlearningstrategic}, beginning with supervised fine-tuning and followed by RL-based policy optimization. 
The primary goal of Phase I is to equip the LLM agent with stable generation across inner-reasoning, tool-selection, and tool-integration steps, which serves as the basis for the subsequent tool-selection refinement in Phase II.

\textbf{Phase II -- Tool-Selection Refinement.} 
After the agent acquires stable reasoning capabilities in Phase I, we further refine the model with a dedicated optimization focused on tool-selection steps. Unlike the full-trajectory optimization in Phase I, we mask out internal reasoning and tool-integration steps, restricting optimization to the tool-selection segments of trajectories. This masked formulation places special emphasis on improving the tool-selection process.

\textbf{KL-Regularized Tool-Selection Objective.} 
In Phase II, we train the LLM agent via a KL-regularized RL objective to refine its tool-selection process. Formally, given an input question $x$ and the toolset $T$, our objective is to optimize the LLM agent policy from Phase I by:
\begin{equation}
\label{eq:traj_mask_obj}
\begin{aligned}
&\max_{\pi_\theta}\;
\mathbb{E}_{\tau\sim\pi_\theta (\cdot\mid x,T)}
\big[R_{\text{tool}}(\tau)\big]\\
&\;-\;
\beta \cdot \,D_{\mathrm{KL}}\!\big(
\pi_\theta(\cdot\mid x,T)\,\big\|\,\pi_{\text{old}}(\cdot\mid x,T)
\big),
\end{aligned}
\end{equation}
where $\beta$ controls the regularization intensity and $R_{\text{tool}}(\cdot)$ denotes the masked trajectory reward that evaluates solely on the correctness of tool-selection steps (defined later).

\textbf{Motivation on Policy Optimization under PL Ranking.} 
Previous agentic RL approaches \citep{shao2024deepseekmath,yu2025dapo} directly optimize trajectory output rewards but do not explicitly capture the ordering relationships among candidate tools in the toolset. 
On the other hand, Plackett–Luce (PL) ranking \citep{luce1959individual, qi2026autoregressive} converts trajectory rewards into a permutation distribution, so that higher-reward tools are prioritized over weaker ones.
This formulation naturally aligns with our embedding-anchored selection mechanism in \eqref{eq:tool_embed_select}. 
Motivated by this, we optimize \eqref{eq:traj_mask_obj} under the PL ranking framework, aligning the policy-induced ranking with reward-guided tool preferences and providing a principled refinement of tool-selection.

\vspace{-5pt}
\subsection{PL Ranking \& Optimization on Tool Selection}
\label{sec:pl_ranking}
We detail how the LLM agent is optimized under the PL ranking framework.
The key idea is to view trajectory rollouts for each query as a ranked list, where the relative order is induced by trajectory rewards derived from tool-selection steps.
By mapping these rewards into a probabilistic ranking distribution, we connect the LLM agent policy optimization with ranking consistency and train the policy with a tractable loss that directly improves the LLM agent’s tool-selection.

\textbf{Trajectory Collection \& Reward Assignment.} 
For each input question $x$ and toolset $T$, we first sample $N$ trajectory rollouts $\mathcal{T} = \{\tau^{(j)}\}_{j=1}^N $ from the LLM agent $\pi_\theta(\cdot \mid x, T)$. Within each trajectory, every tool-selection step $\tau^{\text{select}}_i$ is assigned a \textit{step-level reward} $r_\text{tool}$ that jointly evaluates the quality of the tool-selection rationale and the correctness of the final answer produced with the chosen tool:
\begin{equation}
\label{eq:step_reward}
\begin{aligned}
    r_{\text{tool}}(x,\tau^{\text{select}}_i) = \text{PRM}\!\left(\tau^{\text{select}}_i\right) + \text{Acc}(x,t_k), \quad t_k \in T,
\end{aligned}
\end{equation}
where the process reward model, $\text{PRM}(\cdot)$, provides dense supervision by evaluating the reasoning quality of each tool-selection step $\tau^{\text{select}}_i$, and $\text{Acc}(x,t_k)$ measures whether the final answer obtained using tool $t_k$ is correct.
We then aggregate all per-step selection rewards into a single masked \textit{trajectory-level reward} \(R_{\text{tool}}(\tau)=\tfrac{1}{|S(\tau)|}\sum_{i\in S(\tau)} r_{\text{tool}}(x,\tau^{\text{select}}_i)\), where \(S(\tau)\) is the (random) set of selection-step indices in each trajectory \(\tau\). We use $R_{\text{tool}}$ for the KL-Regularized optimization in \eqref{eq:traj_mask_obj}.

\begin{figure*}[!t]
    \centering
    \includegraphics[width=\linewidth]{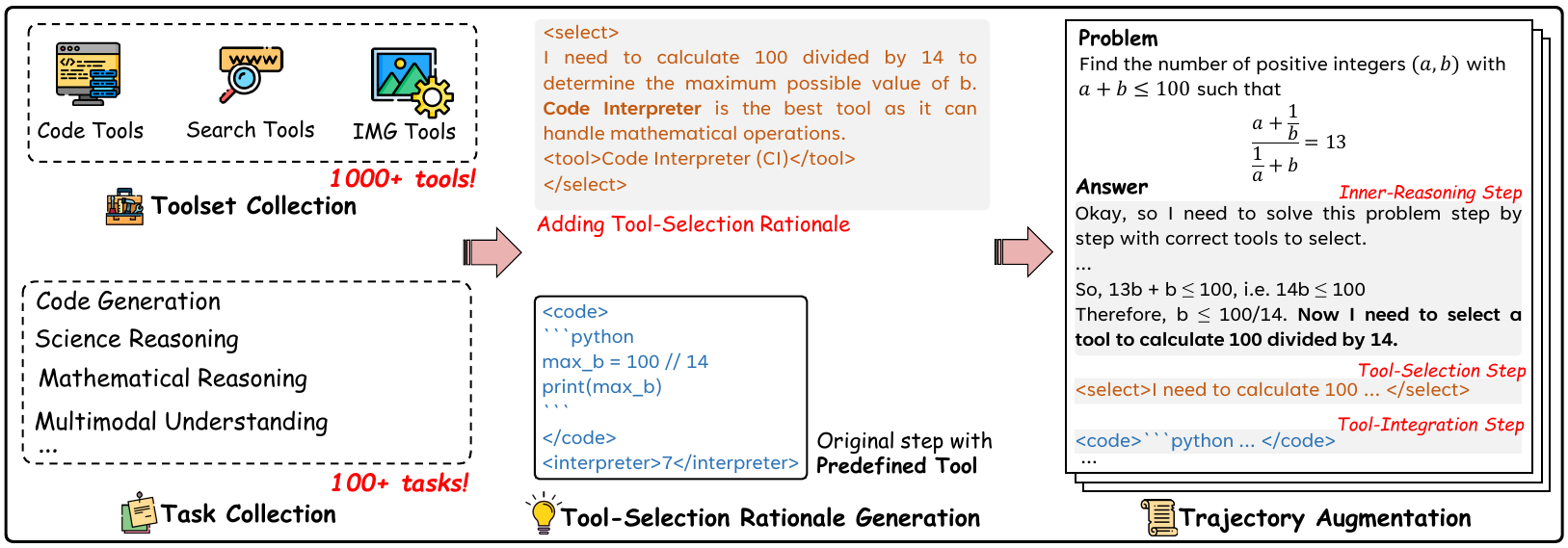}
    \caption{\textbf{Data curation pipeline for \our} (Section \ref{sec:data_curation}). The overall pipeline has three stages: (i) \textit{Toolset \& Task Collection}, assembling 1,000+ tools with metadata across 100+ tasks in math, science, code, and multimodal reasoning; (ii) \textit{Tool-Selection Rationale Generation}, producing explicit justifications for tool choices; and (iii) \textit{Trajectory Augmentation}, combining CoT reasoning, tool-selection, and tool-integration steps into complete trajectories for robust training.}
    \label{fig:data_pipeline}
    \vspace{-5pt}
\end{figure*}

\textbf{PL Ranking over Collected Rollouts.}
Given the collected set of trajectory rollouts $\mathcal{T}$ with associated tool-selection rewards $R_{\text{tool}}$, for a permutation $\sigma$ of $N$ trajectories, we have the induced PL ranking model of the LLM agent $\pi_\theta$:  
\begin{equation}
\label{eqa:pl_distribution}
P_{\pi_{\theta}}(\sigma \mid \mathcal{T}) 
= \prod_{j=1}^{N} 
\frac{\exp\!\big(R_{\text{tool}}(\tau^{\sigma(j)})\big)}
{\sum_{l=j}^{N} \exp\!\big(R_{\text{tool}}(\tau^{\sigma(l)})\big)},
\end{equation}
where $\sigma(j)$ denotes the trajectory placed at rank $j$ in the permutation $\sigma$. \eqref{eqa:pl_distribution} shows that trajectory candidates with higher tool-selection reward appear earlier in the ranking order, thereby providing a tool-preference aware distribution over the $N$ candidate rollouts.

\textbf{Bridging PL Ranking with Policy Optimization.} In \eqref{eqa:pl_distribution}, directly optimizing the PL distribution over the $N!$ possible trajectory permutations is computationally infeasible \citep{cheng2010label}. To obtain a tractable training objective, we establish a theoretical bridge that links the PL ranking to the LLM agent’s policy optimization. In the following theorem, we establish that learning the optimal policy is equivalent to matching its induced PL ranking distribution over candidate trajectory rollouts.
\vspace{-5pt}

\begin{theorem}[\textbf{Equivalence between Optimal Policy and PL Ranking}]
\label{prop:equivalence}
Consider the KL-regularized RL objective defined in \eqref{eq:traj_mask_obj} with the tool-selection reward $R_\text{tool}$ defined in \eqref{eq:step_reward}.
Let $\pi^*$ denote the corresponding optimal policy \citep{rafailov2023direct} for \eqref{eq:traj_mask_obj}. Then, a trainable policy $\pi_\theta$ is equal to the optimal policy (i.e., $\pi_\theta = \pi^*$) if and only if their induced PL ranking distributions coincide for any input $x$ and trajectory collection $\mathcal{T}$, i.e.
\[
\pi_\theta = \pi^* 
\;\;\;\Longleftrightarrow\;\;\; 
P_{\pi_\theta}(\sigma \mid \mathcal{T}) \;=\; P_{\pi^*}(\sigma \mid \mathcal{T}), 
\quad \forall \sigma.
\]
\end{theorem}

\vspace{-3pt}

Theorem \ref{prop:equivalence} (proof in Appendix \ref{app:proof}) suggests that optimizing the LLM agent policy provides a tractable surrogate to direct PL ranking optimization. 
Accordingly, leveraging the closed-form solution of $\pi^*$ \citep{schulman2017proximalpolicyoptimizationalgorithms,rafailov2023direct}, we employ a practical Cross-Entropy (CE) loss to train the LLM agent policy $\pi_\theta$ towards the optimal policy $\pi^*$ on a training set $\mathcal{D}$:

\vspace{-5pt}
\begin{equation}
\label{eq:ce_loss_expanded}
\begin{aligned}
&\mathcal{L}_{\text{CE}}
= - \; \mathbb{E}_{x \sim \mathcal{D}} \sum_{\tau \in \mathcal{T}}
\pi^*(\tau \mid x,\mathcal{T}) \, \log \pi_\theta(\tau \mid x, \mathcal{T}) \\[6pt]
&= - \; \mathbb{E}_{x \sim \mathcal{D}} 
\sum_{\tau \in \mathcal{T}}
\frac{\pi_{\text{old}}(\tau \mid x, T) \, \exp\!\big(\tfrac{1}{\beta} \, R_{\text{tool}}(\tau)\big)}
{\sum_{\tau' \in \mathcal{T}} \pi_{\text{old}}(\tau' \mid x, T)\,\exp\!\big(\tfrac{1}{\beta} \, R_{\text{tool}}(\tau')\big)}
\; \\& \quad \space \log \frac{\pi_\theta(\tau \mid x, T)}
{\sum_{\tau' \in \mathcal{T}} \pi_\theta(\tau' \mid x, T)}.
\end{aligned}
\vspace{-10pt}
\end{equation}

\section{Data Curation for Tool-Selection}
\label{sec:data_curation}

Since existing training data \citep{yu2025dapo,feng2025retoolreinforcementlearningstrategic} and downstream evaluations \citep{qian2025toolrl, wang2025mllm} for LLM agents prescribe fixed tool usage tailored to specific tasks, they cannot be directly applied to train or evaluate tool selection setups. To address this limitation, we extend these data recipes to curate a 200k dataset that \textit{explicitly integrates tool selection into the reasoning trajectory}. In particular, we design a data curation pipeline that simulates real-world scenarios of agentic tool selection and reasoning across diverse domains, including mathematical and scientific reasoning, code generation, and multimodal understanding. As illustrated in Figure~\ref{fig:data_pipeline}, our pipeline consists of three key stages:

\circnum{1} \textbf{Toolset \& Task Collection.} 
We begin by assembling a diverse library of candidate tools, drawing from prior tool integration studies \citep{su2025openthinkimg,feng2025retoolreinforcementlearningstrategic,jin2025searchr1trainingllmsreason}. 
In particular, we collect three representative categories: 
(i) \emph{Code Tools}, including code sandboxes and interpreters; 
(ii) \emph{Search Tools}, covering search engines and web browser APIs (e.g., Jina Reader); 
and (iii) \emph{Multimodal Tools}, comprising image processing and understanding modules such as OCR \citep{su2025openthinkimg} and GroundingDINO \citep{liu2024grounding}. We collect the total of 1,346 tools during this process. Next, we correspondingly curate diverse agentic tasks aligned with these tools, spanning mathematical and scientific reasoning, code generation, search-based QA, and related domains, covering 120 task types. Together, the toolset and task set provide the foundation for enabling agents to operate in open-ended, multimodal environments.

\circnum{2} \textbf{Tool-Selection Rationale Generation.} 
As real-world toolsets are dynamic and may include unseen tools, \our (Section~\ref{sec:method}) formulates tool selection as a step-wise decision-making process in which the agent reasons toward selecting the most appropriate tool given the current context. To support this from a data perspective, we revisit each collected trajectory and, before every tool invocation, prompt an expert reasoning model to generate an explicit rationale explaining why the selected tool is suitable at that step. The rationales provide supervision that aligns tool selection with contextual reasoning rather than fixed tool identities.

\circnum{3} \textbf{Trajectory Augmentation.} 
Next, we filter and insert newly generated tool-selection rationales back into the original trajectories. To ensure data quality, we first apply LLM-as-a-judge \citep{zheng2023judging} to filter out rationales that result in incorrect tool choices. The remaining valid rationales are placed between the model’s internal reasoning and the subsequent tool invocation (Figure~\ref{fig:data_pipeline}). Finally, we use the expert model to review and refine the full trajectories, smoothing transitions, and removing logical inconsistencies. This process produces a corpus of 200k high-quality trajectories with explicit reasoning, tool-selection, and tool-integration steps (\eqref{eq:trajectory_pattern}). Additional dataset statistics and dynamic tool-selection settings are provided in Appendix~\ref{app:curation_pipeline_details}.

\textbf{Supporting Unseen and Evolving Toolsets.} To support the setting of unseen and evolving tool environments, we explicitly decouple toolset construction from training-instance curation and progressively expand the tool-selection pool during inference. 
Specifically, among the 1{,}346 tools constructed in Stage~\circnum{1}, only a subset of 460 tools (34.2\%) aligned with the training tasks is used during \our’s dual-stage training.
For the rationales curated in Stages~\circnum{2} and~\circnum{3}, all training instances involve tool-selection rationales drawn exclusively from the 460-tool subset.
During downstream inference, we progressively expand the tool-selection pool across multiple rounds, introducing additional tools beyond those seen during training. In the final training stage, the full tool library of 1{,}346 tools is available, of which 886 tools (65.8\%) are entirely unseen during training.
This separation between toolset construction and training-instance curation enables \our to operate under evolving tool environments and to fairly evaluate its ability to select previously unseen tools at inference. Additional data statistics and settings are listed in Appendix \ref{app:stats} and \ref{app:data_settings}.

\vspace{-8pt}

\section{Experiments}
\label{sec:exp}

\begin{table*}[!t]
    \centering
    \caption{\textbf{Comparison of \our with advanced LLM agents and tool-integration methods across math, search, and multimodal benchmarks.} While some baselines excel in isolated domains (e.g., ReTool on math, Search-R1 on search, GPT-4o on multimodal), \our{} delivers balanced and robust performance across all diverse tasks, demonstrating the generalization benefits of dynamic tool-selection and integration. The best results are \textbf{bold}, and the second-best results are \underline{underlined}.}
    \label{tab:main_res_method}
    \small
    \resizebox{0.95\textwidth}{!}{
    \begin{tabular}{lcccccccc}
        \toprule
        \multirow{2}{*}{\textbf{Method}} & \multirow{2}{*}{\textbf{Size}} & \multicolumn{2}{c}{\textbf{Math $\uparrow$}} & \multicolumn{2}{c}{\textbf{Search $\uparrow$}} & \multicolumn{3}{c}{\textbf{Multimodal $\uparrow$}}\\
        \cmidrule(lr){3-4} \cmidrule(lr){5-6} \cmidrule(lr){7-9}
        & & AIME24 & AIME25 & HotpotQA & 2Wiki & V-Chart & V-Math & V-Code \\
        \midrule[0.35pt]
        \rowcolor{gray!10} \multicolumn{9}{c}{\textit{\textbf{Advanced LLM Agents}}} \\
        GPT4o & -- & 18.2 & 15.8 & 38.7 & 29.8 & \underline{27.3} & 41.4 & 51.2 \\
        Qwen2.5-VL-72B-Instruct & 72B & 6.3 & 4.1 & 13.9 & 10.4 & 22.5 & 24.5 & 18.2 \\
        Qwen2.5-Math-72B-Instruct & 72B & 31.3 & 27.4 & 23.7 & 16.7 & -- & -- & -- \\
        QwQ-32B & 32B & 47.3 & 31.8 & 32.6 & 22.3 & -- & -- & -- \\
        DeepSeek-R1-Distill-Qwen-14B & 14B & 65.9 & 44.1 & 24.2 & 14.1 & -- & -- & -- \\
        \midrule[0.35pt]
        \rowcolor{gray!10} \multicolumn{9}{c}{\textit{\textbf{Fixed Tool-Integration Methods}}} \\
        Toolformer & 6B & 6.4 & 4.8 & 11.5 & 10.6 & -- & -- & --\\
        ToolGen & 8B & 14.3 & 12.6 & 19.8 & 17.5 & -- & -- & --\\
        v-ToolRL & 2B & -- & -- & -- & -- & 21.6 & 19.5 & 13.3 \\
        Search-R1 & 7B & 12.1 & 7.3 & \underline{43.3} & \underline{44.5} & -- & -- & -- \\
        ReTool & 32B & \textbf{70.1} & \underline{47.4} & 21.4 & 19.7 & -- & -- & -- \\
        \midrule[0.35pt]
        \rowcolor{gray!10} \multicolumn{9}{c}{\textit{\textbf{Dynamic Tool-Selection \& Integration}}} \\
        \textbf{\our(Qwen2.5-VL-7B)} & 7B & 45.3 & 38.9 & 33.2 & 36.5 & 13.2 & \underline{44.3} & \underline{52.5} \\
        \textbf{\our(Qwen3-8B)} & 8B & \underline{68.8} & \textbf{51.2} & \textbf{45.1} & \textbf{48.8} & \textbf{24.7} & \textbf{53.0} & \textbf{56.1} \\
        \bottomrule
    \end{tabular}
    }
    \vspace{-10pt}
\end{table*}

\textbf{Tasks and Datasets.} To assess the effectiveness and generalizability of \our, we conduct evaluations on a broad range of downstream tasks. 
For \textit{mathematical and scientific reasoning}, we evaluate on AIME24 \citep{maxwelljia2024aime24}, AIME25 \citep{maiai25}, and GPQA-Diamond \citep{rein2024gpqa}. For \textit{search-based reasoning}, we use multi-hop question answering datasets including HotpotQA \citep{yang2018hotpotqa}, 2WikiMultiHopQA (2Wiki) \citep{ho2020constructing}, and Bamboogle \citep{press2022selfask}, which together cover a broad spectrum of search and reasoning challenges.
For \textit{multimodal understanding}, we first apply MMSearch \citep{jiang2024mmsearch}, a comprehensive benchmark for evaluating multimodal search performance. In addition, to assess \our performance in multi-turn interaction scenarios, we curate 500 image-based questions spanning three domains: (i) V-Chart, covering chart reasoning with questions sourced from the ChartGemma dataset \citep{masry2024chartgemma}; (ii) V-Math, consisting of math problems derived from GSM8K \citep{cobbe2021training} and AIME24 \citep{maxwelljia2024aime24}; and (iii) V-Code, focusing on code generation with questions drawn from MBPP \citep{austin2021program}, HumanEval \citep{chen2021evaluating}, and LiveCodeBench \citep{jain2024livecodebench}. All three domains are presented to the LLM agents in image form, requiring them to parse visual inputs before reasoning.

\textbf{Baselines and Models.} We compare \our with a broad set of baselines, including advanced LLM agents, fixed tool-integration and selection approaches, and agentic training paradigms. (i) For \textit{advanced LLM agents}, we compare against strong reasoning models including GPT-4o \citep{hurst2024gpt}, Qwen2.5-VL-72B \citep{bai2025qwen25vltechnicalreport}, Qwen2.5-Math-72B \citep{yang2025qwen3}, QwQ-32B \citep{qwen2025qwq32b}, and DeepSeek-R1-Distill-Qwen-14B \citep{guo2025deepseek}. (ii) For \textit{tool-integration methods}, we compare with both general and domain-specific methods: Toolformer \cite{schick2024toolformer} and ToolGen \cite{toolgen} as general tool-integration; ReTool \citep{feng2025retoolreinforcementlearningstrategic} for strategic tool use on math and science reasoning, Search-R1 \citep{jin2025searchr1trainingllmsreason} for search-based question answering, and v-ToolRL \citep{su2025openthinkimg} for multimodal understanding. (iii) For \textit{agentic training methods}, we compare with three common agentic post-training paradigms: supervised fine-tuning (SFT), where the policy model is fine-tuned on our curated training set; Agentic RL algorithms, including GRPO \citep{shao2024deepseekmath} and ARPO \cite{lu2025arpo}.

\textbf{Implementation Details.} We incorporate \our into two off-the-shelf models Qwen3-8B (think mode) \citep{yang2025qwen3} and Qwen2.5-VL-7B \citep{bai2025qwen25vltechnicalreport}, and train them on our curated training dataset described in Section~\ref{sec:data_curation}. For Phase I training, we leverage LLaMa-Factory \citep{zheng2024llamafactory} for SFT and VeRL \citep{sheng2024hybridflow} for RL training. For Phase II training, we train the induced PL-Ranking framework by setting the rollout size $N$ to 8 per question and training for 3 epochs.
All training and inference experiments are conducted on 8xA100-80G GPUs. We leave additional experimental setups in Appendix \ref{app:setup}.

\subsection{Main Results}

\begin{table*}[!t]
    \centering
    \caption{\textbf{Comparison of \our with agentic training baselines.} \our consistently outperforms the training-based baselines, demonstrating that the Phase II PL-ranking stage provides complementary gains by explicitly optimizing dynamic tool selection.
    }
    \label{tab:main_res_training}
    \small
    \resizebox{\textwidth}{!}{
    \begin{tabular}{lccccccccccc}
        \toprule
        \multirow{2}{*}{\textbf{Method}} & \multicolumn{3}{c}{\textbf{Math \& Science $\uparrow$}} & \multicolumn{3}{c}{\textbf{Search $\uparrow$}} & \multicolumn{4}{c}{\textbf{Multimodal $\uparrow$}}\\
        \cmidrule(lr){2-4} \cmidrule(lr){5-7} \cmidrule(lr){8-11}
        & AIME24 & AIME25 & GPQA & HotpotQA & 2Wiki & Bamboogle & V-Chart & V-Math & V-Code & MMSearch \\
        \midrule[0.35pt]
        \textbf{Qwen3-8B} & 46.7 & 40.0 & 58.1 & 26.2 & 31.1 & 37.4 & 6.0 & 39.4 & 43.2 & -- \\
        SFT & 53.3 & 40.0 & 63.6 & 34.3 & 37.9 & 40.2 & 16.3 & 44.6 & 48.6 & -- \\
        GRPO & 56.7 & 46.7 & 70.7 & 38.6 & 45.2 & 52.8 & 22.8 & 51.8 & 55.2 & -- \\
        ARPO & 66.7 & 50.0 & 72.8 & 43.6 & 48.1 & 54.5 & 23.2 & 52.6 & 55.8 & --\\
        \rowcolor{secondbg}
        \textbf{\our} & \textbf{66.7} & \textbf{53.3} & \textbf{73.7} & \textbf{45.1} & \textbf{48.8} & \textbf{56.8} & \textbf{24.7} & \textbf{53.0} & \textbf{56.1} & -- \\
        \midrule[0.35pt]
        \textbf{Qwen2.5-VL-7B} & 6.7 & 13.2 & 7.6 & 14.0 & 18.2 & 10.6 & 2.6 & 22.9 & 26.6 & 27.8 \\
        SFT & 30.0 & 20.0 & 30.8 & 24.9 & 22.0 & 24.5 & 6.8 & 28.4 & 43.2 & 39.7 \\
        GRPO & 36.7 & 33.3 & 33.3 & 28.5 & 32.5 & 44.0 & 10.7 & 42.7 & 49.6 & 45.2 \\
        ARPO & 46.7 & \textbf{43.3} & 36.1 & 32.0 & 35.1 & 46.4 & 13.2 & 43.0 & 50.9 & 48.2 \\
        \rowcolor{secondbg}
        \textbf{\our} & \textbf{46.7} & 40.0 & \textbf{38.4} & \textbf{33.2} & \textbf{36.5} & \textbf{48.2} & \textbf{13.2} & \textbf{44.3} & \textbf{52.5} & \textbf{49.3} \\
        \bottomrule
    \end{tabular}
    }
\end{table*}

\textbf{\our Balances Performance Across Domains.}
As shown in Table~\ref{tab:main_res_method}, we first compare \our against advanced LLM agents and tool-integration methods. \our consistently delivers balanced and robust performance across diverse domains. Large LLM agents such as GPT-4o and Qwen2.5-VL-72B rely primarily on internal reasoning and exhibit uneven performance across tasks (e.g., GPT-4o excels in multimodal benchmarks but underperforms on math-intensive tasks, while Qwen2.5-Math-72B shows the opposite trend). Fixed tool-integration methods similarly suffer from domain brittleness. ReTool achieves strong math performance but degrades substantially on search tasks. In contrast, by dynamically selecting and integrating tools during reasoning, \our extends the model’s effective capability beyond its internal knowledge and achieves consistently strong results across various benchmarks. 

\textbf{\our Advances beyond Standard Agentic Training.}
Table~\ref{tab:main_res_training} compares \our with two commonly adopted agent training paradigms. For Qwen3-8B, we equipped the model with vision-extraction tools such as OCR to retrieve text from image inputs. \our outperforms both SFT and RL methods by explicitly modeling and optimizing tool selection within reasoning trajectories. For instance, \our (Qwen3-8B) delivers an average of $6.7\%$ improvement on AIME24 and $6.5\%$ on HotpotQA compared with GRPO and ARPO.
Our results demonstrate that, beyond stabilizing reasoning via Phase I, \our's Phase II PL-ranking provides additional benefits by explicitly modeling dynamic tool selection, thereby enabling tool-aware optimization.
\subsection{In-depth Analyses on \our}
\vspace{-8pt}

\begin{figure}[!th]
    \centering
    \includegraphics[width=\linewidth]{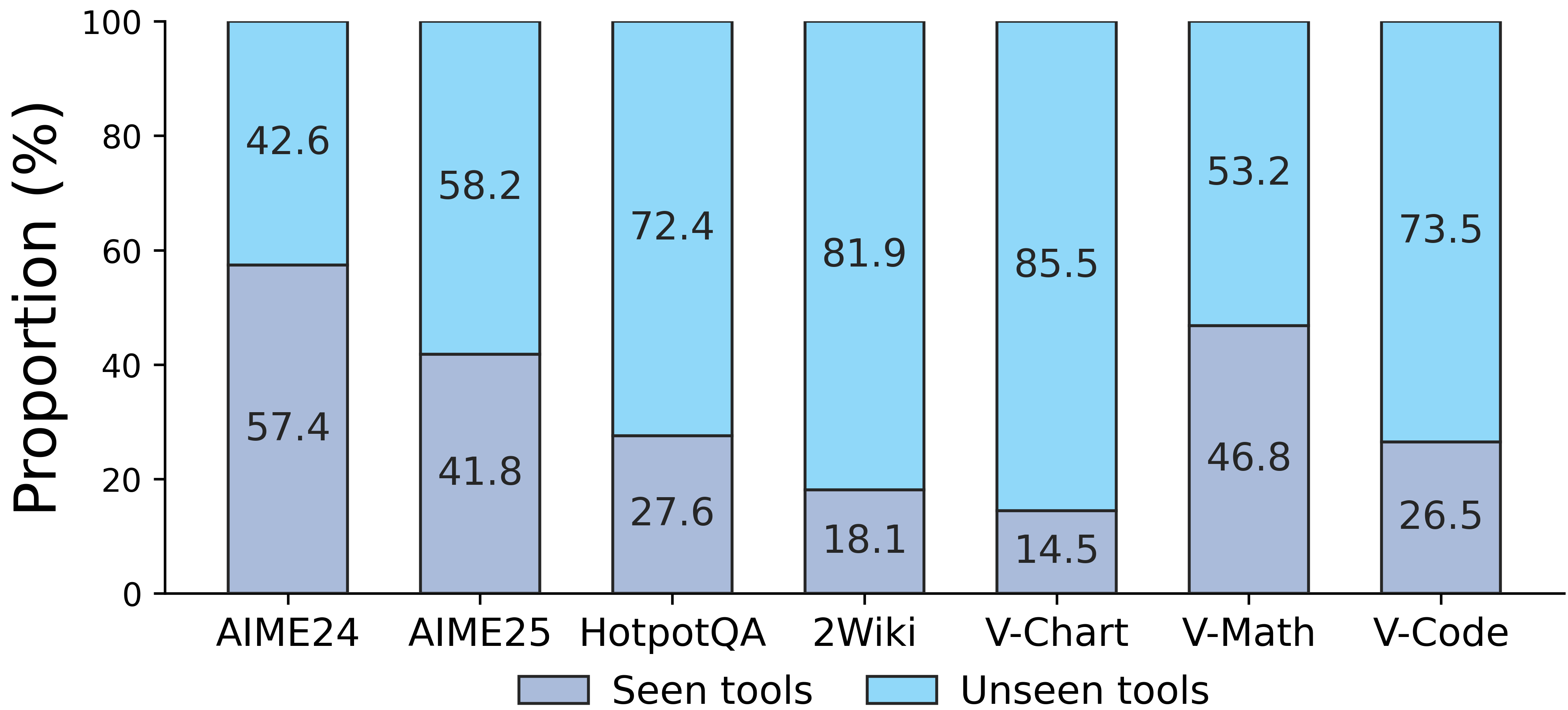}
    \caption{Proportion of seen vs. unseen tools selected by AutoTool (Qwen3-8B) across 7 benchmarks.}
    \label{fig:seen_unseen}
    \vspace{-10pt}
\end{figure}

\textbf{Generalization on Unseen Tools.}
To better understand \our's selection ability on unseen tools, we analyze AutoTool (Qwen3-8B) generated trajectories in Table \ref{tab:main_res_method}. We extract all tool-selection steps and identify the selected tools. Across all 7 tasks, we compute the fractions of \emph{seen v.s. unseen} tools among runs that lead to correct final answers. As shown in Figure~\ref{fig:seen_unseen}, AutoTool relies largely on unseen tools, especially on search and multimodal tasks. This suggests that the improved performance is driven by generalizing tool selection beyond memorizing training-time tools.

\begin{table}[!t]
\centering
\caption{Out-of-domain performance under fully unseen tool settings. \our is evaluated using only unseen tools at inference.}
\label{tab:unseen_tools}
\resizebox{\linewidth}{!}{
\begin{tabular}{lcccc}
\toprule
\textbf{Method} & \textbf{AIME24} & \textbf{HotpotQA} & \textbf{LiveCodeBench} & \textbf{MedQA} \\
\midrule
Search-R1 & 13.3 & 43.3 & 36.2 & 73.5 \\
ReTool    & 66.7 & 21.4 & 71.5 & 64.7 \\
\textbf{AutoTool (with unseen tools)} & \textbf{66.7} & \textbf{43.8} & \textbf{77.9} & \textbf{84.3} \\
\bottomrule
\end{tabular}}
\vspace{-5pt}
\end{table}

\textbf{Out-of-Domain Performance under Unseen Tools.}
We further evaluate \our in a fully unseen-tool setting by restricting the inference-time tool pool to the 886 unseen tools and testing on AIME24 and V-Chart. We additionally assess out-of-domain generalization on two benchmarks not covered by our curated dataset, LiveCodeBench-V2~\cite{jain2024livecodebench} and MedQA~\cite{yang2025llm}. All experiments are conducted using AutoTool (Qwen3-8B) under identical inference settings.
As shown in Table~\ref{tab:unseen_tools}, \our consistently achieves the strongest performance across all evaluated domains. In addition, relative to using the full toolset (1{,}346 tools), restricting AutoTool to unseen tools results in only a marginal performance drop of less than 2\% on both AIME24 and HotpotQA.

\begin{figure}[!h]
    \centering
    \includegraphics[width=\linewidth]{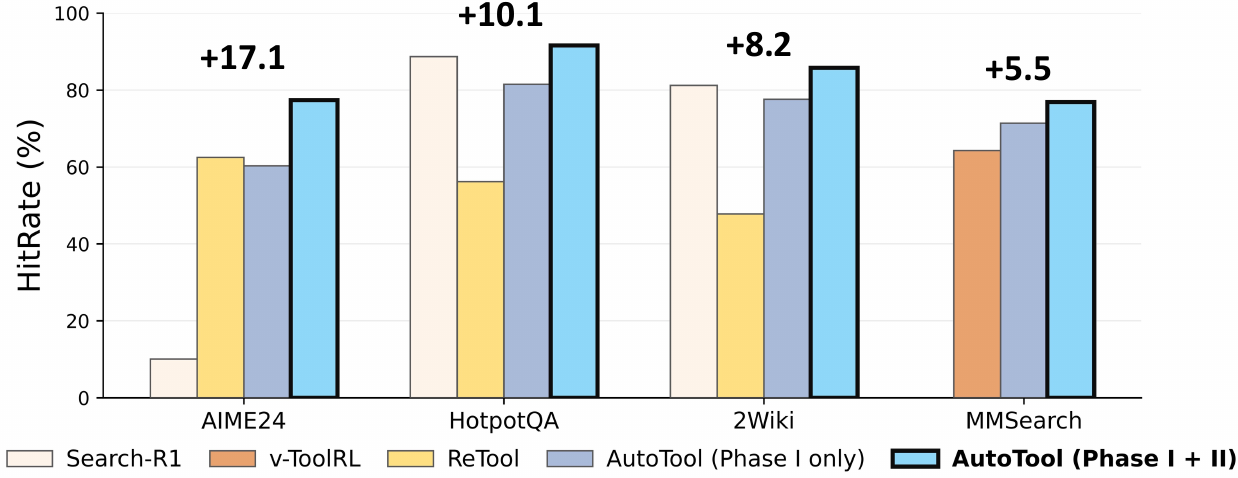}
    \caption{Analyses on Tool-selection hit rates. Higher hit rates correlate with improved downstream performance.}
    \label{fig:hitrate}
\end{figure}

\textbf{Relationship between Tool-Selection and Performance.}
To further examine the relationship between tool-selection capability and final performance, we analyze the overall \textit{HitRate} of \our. Comparing Figure~\ref{fig:hitrate} with the main results in Tables~\ref{tab:main_res_method} and~\ref{tab:main_res_training}, we find that higher tool-selection hit rates consistently correlate with improved downstream accuracy across methods. Notably, Phase-II refinement substantially increases \our’s HitRate, which directly translates into stronger end-task performance.

\begin{figure}[!h]
    \centering
    \includegraphics[width=0.96\linewidth]{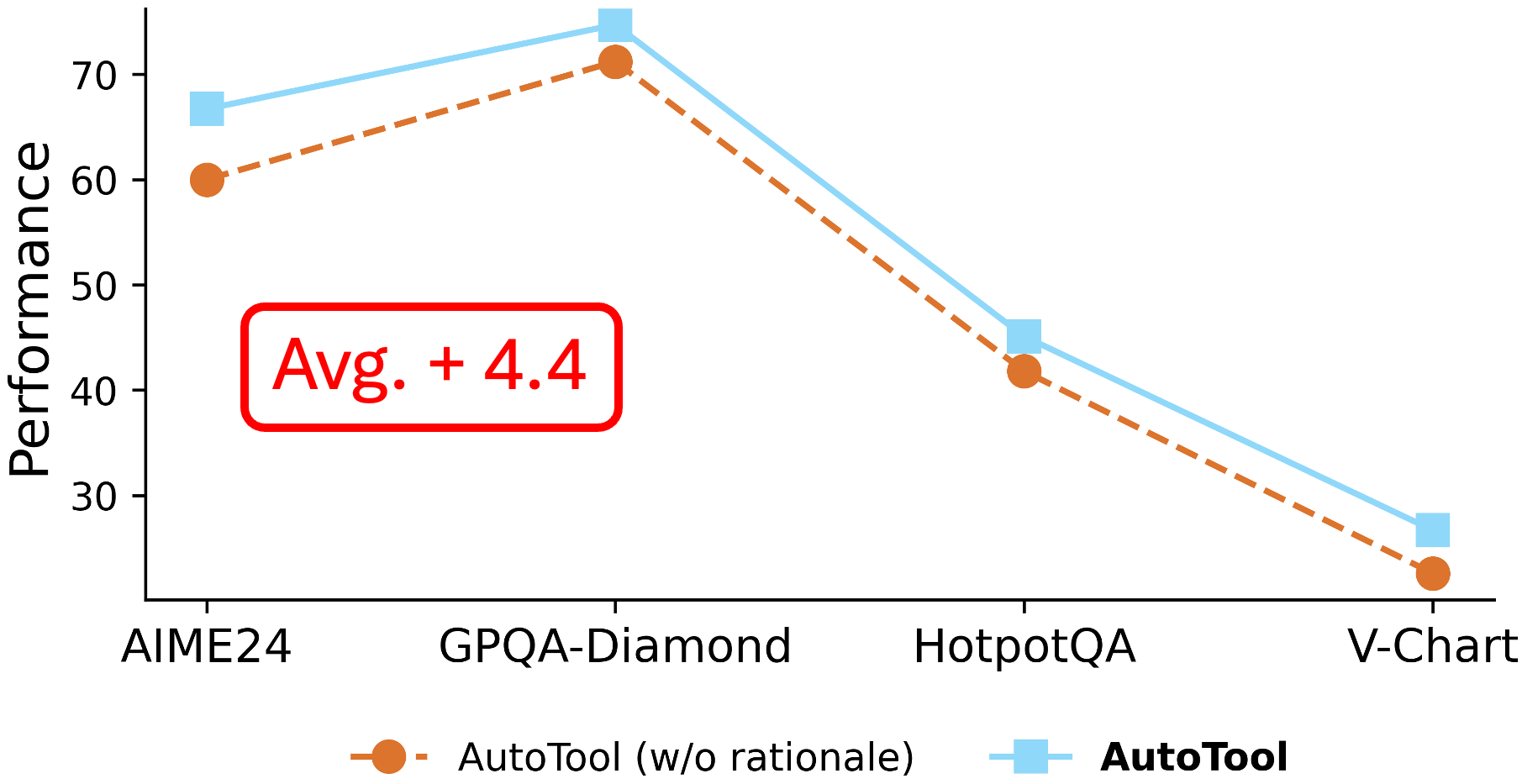}
    \caption{Ablations on tool-selection rationales.}
    \label{fig:rationale}
    \vspace{-10pt}
\end{figure}

\textbf{Effectiveness of Tool-Selection Rationales.}
We conduct an ablation study to evaluate the effectiveness of incorporating tool-selection rationales. As shown in Figure~\ref{fig:rationale}, models trained with rationales consistently outperform those without. In addition, we observe that tool-selection rationales increase successful tool executions by 13.5\%, indicating more reliable tool use during intermediate steps. We leave additional analyses and case studies in Appendix \ref{app:add_exps}.

\vspace{-5pt}
\section{Related Work}

\textbf{Agentic Reasoning in RL.}
Recent work on agentic reasoning in LLMs leverages reinforcement learning to directly optimize reasoning behaviors. The introduction of critic-free methods such as GRPO and its variants~\citep{li2025repo,yu2025dapo,zou2026reasonfluxprm} marked a key shift toward group-relative advantage estimation. Building on this, subsequent studies incorporate tool use into agentic RL, including ARTIST~\citep{singh2025agentic} and rStar2~\citep{shang2025rstar2}, while parallel lines of work on self-evolving agents emphasize continual adaptation through learning and structural modification~\citep{fang2025comprehensive}.

\textbf{Tool Use and Integration in LLMs.} 
Early work on tool use in LLMs relied on prompting strategies that interleaved reasoning with external queries or API calls \citep{press2022selfask,yao2023react,zou2026transformer}. Additional existing works \citep{nakano2022webgpt, schick2024toolformer,hu2024visual} proposed training-based approaches that explicitly empower LLMs with tool-use capabilities, as well as extended tool-usage to large-scale API integration and planning \citep{patil2023gorilla,song2023restgpt}, multi-tool orchestration \citep{shen2023hugginggpt}, and adaptive selection \citep{qin2024toolllm}. More recent work integrates tool use tightly into training and system design, such as multimodal agent tuning \citep{gao2025multimodal} and infrastructure for dynamic discovery and synchronization \citep{ding2025toolregistry,mastouri2025automcp}. We leave additional related works in Appendix \ref{app:related_works}.

\section{Conclusion}
We introduced \our, a framework that equips LLM agents with dynamic tool-selection capabilities. We train the agent policy through a dual-phase optimization pipeline, enabling \our to move beyond static tool use and achieve adaptive, generalizable reasoning. Extensive experiments across mathematics, science, retrieval, and multimodal tasks demonstrate that \our consistently outperforms training-only baselines and existing tool-enhanced methods. These results highlight the importance of explicit tool-selection optimization in building extensible agents that can effectively operate under evolving tool environments.

\section*{Acknowledgment}

The authors thank members of the iDEA-iSAIL Group for helpful discussions and feedback on this work. 
This work is supported by National Science Foundation under Award No. IIS-2117902. The views and conclusions are those of the authors and should not be interpreted as representing the official policies of the funding agencies or the government.

\vspace{-5pt}
\section*{Impact Statement}
\our operates within standard large language model deployment settings and focuses on optimizing inference-time decision-making over existing tool interfaces. It does not introduce new forms of autonomy or human replacement beyond current agentic systems. We don't anticipate potential negative societal impacts arising uniquely from this work.

\nocite{yue2023mammoth}

\bibliography{icml2026_main}
\bibliographystyle{icml2026}

\appendix

\newpage 
\onecolumn

\appendix
\renewcommand{\contentsname}{\Large Table of Contents}
{\hypersetup{linkcolor=black}
\tableofcontents
}

\addtocontents{toc}{\protect\setcounter{tocdepth}{2}}

\newpage

\vspace{-5pt}
\section{Additional Details on \our{} Data Curation}
\label{app:data}

\subsection{Data Curation Pipeline Details}
\label{app:curation_pipeline_details}
\circnum{1} \textbf{Toolset \& Task Collection.} 
We begin by assembling a diverse library of candidate tools, drawing from prior tool integration studies \citep{su2025openthinkimg,feng2025retoolreinforcementlearningstrategic,jin2025searchr1trainingllmsreason}. 
In particular, we collect three representative categories: 
(i) \emph{Code Tools}, including code sandboxes and interpreters that allow the agent model to execute code and leverage outputs or error messages during generation; 
(ii) \emph{Search Tools}, covering search engines and web browser APIs (e.g., Jina Reader) for retrieving external knowledge from various corpora; 
and (iii) \emph{Image Tools}, comprising image processing and understanding modules such as OCR \citep{su2025openthinkimg} and GroundingDINO \citep{liu2024grounding}, which enable the agent model to extract and reason over textual content embedded in images. 
This results in a comprehensive toolset of over 1,000 tools. Each tool is annotated with a feature description specifying its toolset index, functionality, input-output schema, and usage constraints.  
In addition, we curate a diverse set of downstream tasks aligned with these tools, spanning mathematical and scientific reasoning, code generation, search-based QA, and related domains, covering over 100 task types.  
Together, the toolset and task set provide the foundation for enabling agents to operate in open-ended, multimodal environments.

\circnum{2} \textbf{Tool-Selection Rationale Generation.} 
In real-world scenarios, toolsets are dynamic and evolve with new incoming tools. Therefore, we avoid formulating tool selection as a simple classification problem over a fixed tool index. Instead, we cast it as a decision-making process in which the agent model should learn to reason step by step toward selecting the most appropriate tool for the current context.
To support this, we first retrieve the original responses from the collected downstream tasks in the previous stage, which consist of reasoning trajectories involving both model reasoning and tool integration components.
In each response chain, before a specific tool is invoked, we leverage an expert reasoning model (DeepSeek-R1) to generate explicit rationales that justify why a particular tool is preferred at each step, serving as the supervision signals that align tool selection with contextual reasoning. We illustrate our template to generate tool-selection rationale training data in Figure \ref{fig:prompt_template}.

\circnum{3} \textbf{Trajectory Augmentation.} 
After collecting tool-selection rationales, we insert them back into the original response trajectories. To ensure quality, we first leverage LLM-as-a-judge \citep{zheng2023judging} to filter out invalid rationales that lead to incorrect tool choices. We then insert the valid rationales between the model’s internal reasoning and the subsequent tool invocation (see the example in Figure~\ref{fig:data_pipeline}). Finally, we employ DeepSeek-R1 to review the entire trajectory, smoothing connections and eliminating logical inconsistencies. This process yields a corpus of 200k data instances, including various tasks with corresponding tool selection and integration trajectory responses.

\subsection{Data Curation Statistics}
\label{app:stats}
Through Stage~1: Toolset \& Task Collection, we curate a comprehensive toolset of
1{,}346 tools drawn from diverse domains (e.g., online APIs, open-source libraries, multimodal utilities).
Among them, 460 tools are used during training, while the remaining 886 tools are kept
completely unseen and reserved exclusively for inference-time evaluation. During training, the candidate toolset is fixed to the 460 predefined tools, because all
tool-selection rationales and ground-truth trajectories in our source datasets involve only these
tools. This fixed training pool ensures stable tool-embedding learning and consistent PL-ranking
optimization. Note that Stage~1 is decoupled from Stages~2 and 3, which focus on collecting high-quality
reasoning trajectories strictly over the 460 seen tools.
This separation prevents leakage of unseen tools into training and enables a clean evaluation of generalization. 
\begin{wraptable}{r}{0.43\linewidth}
\centering
\vspace{-3pt}
\caption{Statistics of the curated toolset.}
\label{tab:dataset_stats}
\resizebox{0.43\textwidth}{!}{
\begin{tabular}{lc}
\toprule
\textbf{Category} & \textbf{\#Tools} \\
\midrule
Full Toolset & 1346 \\
Seen Tools (Training) & 460 \\
Unseen Tools (Inference Only) & 886 \\
\bottomrule
\end{tabular}
}
\vspace{-10pt}
\end{wraptable}
During inference, we expand the available toolset to the full 1{,}346 tools, simulating an \emph{evolving} tool environment in which many tools have never been observed during
training. 
This design allows us to systematically evaluate AutoTool's ability to perform dynamic
tool selection under distribution shift, where the candidate tools may differ across tasks and
include entirely new unseen tools. The overall dataset statistics are summarized in Table \ref{tab:dataset_stats}. We next describe how experiments are conducted under the evolving toolset setting.

\vspace{-5pt}
\subsection{Evolving Toolset Settings}
\label{app:data_settings}
\vspace{-5pt}

\textbf{Training-Time Toolset.}
During training, the toolset is fixed to the 460 ``seen'' tools collected through Stage 2 and Stage 3 of our data curation pipeline. All training-time tool-selection rationales and ground-truth tool trajectories reference only these 460 tools because the underlying source datasets provide demonstrations exclusively for this subset. As a result, every training instance uses the same 460-tool candidate pool. This design ensures stable tool-embedding learning and consistent PL-ranking optimization across all training samples. As \our{} learns dynamic tool-selection behavior through its embedding-based matching mechanism (Eq.~4), rather than through dynamic changes in the training-time toolset itself. Therefore, an evolving toolset is not required during training for the model to develop dynamic selection capabilities.

\textbf{Inference-Time Evolving Toolset.}
During inference, \our{} does not assume a static or closed tool inventory. Instead, we simulate realistic tool evolution by evaluating the model over the full toolset that includes both the 460 tools seen during training and an additional 886 unseen tools from our toolset collection. This allows us to evaluate \our{} under realistic, large-scale tool environments while keeping comparisons fair across baselines.

\vspace{-4pt}
\subsection{Template on Generating Tool-Selection Rationales}
\vspace{-4pt}

\begin{figure}[!h]
\centering
\begin{tcolorbox}[
  title={\footnotesize Template for Tool-Selection Rationale},
  colframe=blue!60!black,      %
  colback=blue!3!white,        %
  colbacktitle=blue!15!white,  %
  coltitle=black,              %
  fonttitle=\bfseries,
  width=\textwidth,
  left=0.8mm,
  right=0.8mm,
  boxrule=0.8pt,               %
  borderline={0.5pt}{0pt}{black!30}, %
  top=1mm,           %
  bottom=1mm,
]
\footnotesize

You are a Tool Use Expert in selecting the most appropriate tool(s) to answer user questions.
You are provided with three inputs: \\

\# INPUT QUESTION: ...\\

\# MODEL ANSWER: ...\\

\# TOOLSET (including different types of tools and capabilities) \\
<tool1 name> + <description>\\
<tool2 name> + <description>\\
... \\
Your task is to provide the rationale and the final selected tool from the toolset before the model invokes a tool. Your response:
\end{tcolorbox}
\vspace{-5pt}
\caption{Template to generate tool-selection rationales during \our{}'s data curation process.}
\label{fig:prompt_template}
\end{figure}

\newpage

\section{Theoretical Bridge between Optimal Policy and PL Ranking}
\label{app:proof}

\subsection{Proof of Theorem \ref{prop:equivalence}}
\label{app:proposition.}

In the main body, we denote $P_{\pi}(\sigma | \mathcal{T})$ by omitting the query $x$ for notation brevity. For the derivations below, we reinstate this dependence.
In this case, for a collection of trajectories $\mathcal{T} = \{\tau^{(j)}\}_{j=1}^{|\mathcal{T}|}$ and query $ x$, the Plackett-Luce (PL) ranking model induced by a policy $\pi$, relative to a previous policy $\pi_{\text{old}}$, will be defined as:
\begin{equation}
P_{\pi}(\sigma | \mathcal{T},  x) = \prod_{i=1}^{|\mathcal{T}|} \frac{\exp\left( \beta \log \frac{\pi(\tau^{\sigma(i)} |  x)}{\pi_{\text{old}}(\tau^{\sigma(i)} |  x)} \right)}{\sum_{j=i}^{|\mathcal{T}|} \exp\left( \beta \log \frac{\pi(\tau^{\sigma(j)} |  x)}{\pi_{\text{old}}(\tau^{\sigma(j)} |  x)} \right)},
\end{equation}
where we have $\sigma$ being a permutation (or ranking) of the indices $\{1, 2, \ldots, |\mathcal{T}|\}$, and denote $\tau^{\sigma(i)}$ being the trajectory ranked at position $i$ in permutation $\sigma$. $\frac{\pi(\tau |  x)}{\pi_{\text{old}}(\tau |  x)}$ represents the relative preference of policy $\pi$ over the previous checkpoint before updating.

We consider two policy models: $\pi_{\theta}$ is a trainable policy model with parameters $\theta$, and  $\pi^{*}$ is the optimal policy model.
Subsequently, let us denote the two induced PL ranking models by

\begin{displaymath}
\begin{aligned}
&P_{\pi_{\theta}}(\sigma | \mathcal{T},  x) = \prod_{i=1}^{|\mathcal{T}|} \frac{\exp\left( \beta \log \frac{\pi_{\theta}(\tau^{\sigma(i)} |  x)}{\pi_{\text{old}}(\tau^{\sigma(i)} |  x)} \right)}{\sum_{j=i}^{|\mathcal{T}|} \exp\left( \beta \log \frac{\pi_{\theta}(\tau^{\sigma(j)} |  x)}{\pi_{\text{old}}(\tau^{\sigma(j)} |  x)} \right)},\\
\\
&P_{\pi^{*}}(\sigma | \mathcal{T},  x) = \prod_{i=1}^{|\mathcal{T}|} \frac{\exp\left( \beta \log \frac{\pi^{*}(\tau^{\sigma(i)} |  x)}{\pi_{\text{old}}(\tau^{\sigma(i)} |  x)} \right)}{\sum_{j=i}^{|\mathcal{T}|} \exp\left( \beta \log \frac{\pi^{*}(\tau^{\sigma(j)} |  x)}{\pi_{\text{old}}(\tau^{\sigma(j)} |  x)} \right)}.
\end{aligned}
\end{displaymath}

We will then have the following result on the equivalence between the optimal policy and the optimal PL ranking model.
Results from the following Theorem \ref{prop_appendix_policy_PL_equivalence} supports the conclusion from Theorem \ref{prop:equivalence}.

\begin{theorem}[\textbf{Equivalence between Optimal Policy and PL Ranking}]
\label{prop_appendix_policy_PL_equivalence}
Consider the KL-regularized RL objective defined in \eqref{eq:traj_mask_obj} with the tool-selection reward $R_\text{tool}$ defined in \eqref{eq:step_reward}.
Let $\pi^*$ denote the corresponding optimal policy \citep{rafailov2023direct} for \eqref{eq:traj_mask_obj}. Then, a trainable policy $\pi_\theta$ is equal to the optimal policy (i.e., $\pi_\theta = \pi^*$) if and only if their induced PL ranking distributions coincide for any input $x$ and trajectory collection $\mathcal{T}$, i.e.
\[
\pi_\theta = \pi^* 
\;\;\;\Longleftrightarrow\;\;\; 
P_{\pi_\theta}(\sigma \mid \mathcal{T}) \;=\; P_{\pi^*}(\sigma \mid \mathcal{T}), 
\quad \forall \sigma.
\]
\end{theorem}

\begin{proof}
We need to prove both directions of the equivalence, and we will start with the forward direction that optimal policy indicates the optimal PL ranking.

\textbf{Forward Direction.} 
Suppose $\pi_{\theta}(\tau |  x) = \pi^{*}(\tau |  x)$ for all $\tau$ and $ x$. 
Then, since the policies are identical, their ratios with respect to the old policy are also identical, leading to
\begin{equation}
\frac{\pi_{\theta}(\tau |  x)}{\pi_{\text{old}}(\tau |  x)} = \frac{\pi^{*}(\tau |  x)}{\pi_{\text{old}}(\tau |  x)}
\end{equation}

Therefore, with $\beta>0$, for any permutation $\sigma$:
\begin{displaymath} \begin{split}
P_{\pi_{\theta}}(\sigma | \mathcal{T},  x) &= \prod_{i=1}^{|\mathcal{T}|} \frac{\exp\left( \beta \log \frac{\pi_{\theta}(\tau^{\sigma(i)} |  x)}{\pi_{\text{old}}(\tau^{\sigma(i)} |  x)} \right)}{\sum_{j=i}^{|\mathcal{T}|} \exp\left( \beta \log \frac{\pi_{\theta}(\tau^{\sigma(j)} |  x)}{\pi_{\text{old}}(\tau^{\sigma(j)} |  x)} \right)} 
= \prod_{i=1}^{|\mathcal{T}|} \frac{\exp\left( \beta \log \frac{\pi^{*}(\tau^{\sigma(i)} |  x)}{\pi_{\text{old}}(\tau^{\sigma(i)} |  x)} \right)}{\sum_{j=i}^{|\mathcal{T}|} \exp\left( \beta \log \frac{\pi^{*}(\tau^{\sigma(j)} |  x)}{\pi_{\text{old}}(\tau^{\sigma(j)} |  x)} \right)} \\
&= P_{\pi^{*}}(\sigma | \mathcal{T},  x)
\end{split}     \end{displaymath}

Thus, if the policies are identical, then their induced PL ranking models are also identical.

\textbf{Reverse Direction.}
Suppose $P_{\pi_{\theta}}(\sigma \mid \mathcal{T},  x) = P_{\pi^{*}}(\sigma \mid \mathcal{T},  x)$ for possible permutations $\sigma$, finite trajectory sets $\mathcal{T}$, and queries $ x$ (with $\beta>0$ and $\pi_{\text{old}}(\cdot\mid x)>0$ on the support).
First, define the scores
\[
s_{\pi}(\tau\mid x) \;:=\; \beta \log \frac{\pi(\tau\mid x)}{\pi_{\text{old}}(\tau\mid x)}.
\]

Next, consider any two trajectories $\tau,\tau'$ and the two-trajectory collection $\mathcal{T}=\{\tau,\tau'\}$.
The PL probabilities for the two possible permutations satisfy
\[
P_{\pi_\theta}\big((\tau,\tau') \mid \mathcal{T}, x\big)
=
P_{\pi^*}\big((\tau,\tau') \mid \mathcal{T}, x\big),
\quad
P_{\pi_\theta}\big((\tau',\tau) \mid \mathcal{T}, x\big)
=
P_{\pi^*}\big((\tau',\tau) \mid \mathcal{T}, x\big).
\]

By the PL definition on a two-item set $\mathcal{T} = \{\tau,\tau'\}$, the probability of placing $\tau$ first under $\pi$ will be
\[
P_{\pi}\big((\tau,\tau')\mid\mathcal{T}, x\big)
=\frac{e^{s_{\pi}(\tau\mid x)}}{e^{s_{\pi}(\tau\mid x)}+e^{s_{\pi}(\tau'\mid x)}}
=\frac{1}{1+\exp\!\big(s_{\pi}(\tau'\mid x)-s_{\pi}(\tau\mid x)\big)}.
\]
Hence equality of the two top-1 probabilities for $\pi_\theta$ and $\pi^*$ implies equality
\[
\frac{P_{\pi_\theta}((\tau,\tau')\mid\mathcal{T}, x)}
{P_{\pi_\theta}((\tau',\tau)\mid\mathcal{T}, x)}
=
\frac{P_{\pi^*}((\tau,\tau')\mid\mathcal{T}, x)}
{P_{\pi^*}((\tau',\tau)\mid\mathcal{T}, x)}
\quad\Longleftrightarrow\quad
e^{\,s_{\pi_\theta}(\tau\mid x)-s_{\pi_\theta}(\tau'\mid x)}
=
e^{\,s_{\pi^*}(\tau\mid x)-s_{\pi^*}(\tau'\mid x)}.
\]
Taking logarithms yields
\[
s_{\pi_\theta}(\tau\mid x) - s_{\pi_\theta}(\tau'\mid x)
\;=\;
s_{\pi^*}(\tau\mid x) - s_{\pi^*}(\tau'\mid x).
\]
Fix an arbitrary reference $\tau_0$ and set $\tau'=\tau_0$. The above identity then gives, for every $\tau$,
\[
s_{\pi_\theta}(\tau\mid x) - s_{\pi_\theta}(\tau_0\mid x)
\;=\;
s_{\pi^*}(\tau\mid x) - s_{\pi^*}(\tau_0\mid x).
\]
Based on the invariance property of the exponential scoring and ranking mechanism (Lemma \ref{lemma_softmax_invariance_property}), this is equivalent to the existence of a constant $C( x) := s_{\pi_\theta}(\tau_0\mid x)-s_{\pi^*}(\tau_0\mid x)$ (independent of $\tau$) such that
\[
s_{\pi_\theta}(\tau\mid x)
\;=\;
s_{\pi^*}(\tau\mid x) + C( x),
\quad \forall \tau.
\]
Equivalently,
\[
\beta \log \frac{\pi_{\theta}(\tau\mid x)}{\pi_{\text{old}}(\tau\mid x)}
\;=\;
\beta \log \frac{\pi^{*}(\tau\mid x)}{\pi_{\text{old}}(\tau\mid x)}
\;+\; C( x),
\]
so dividing by $\beta$ and exponentiating gives
\[
\frac{\pi_{\theta}(\tau\mid x)}{\pi_{\text{old}}(\tau\mid x)}
\;=\;
\frac{\pi^{*}(\tau\mid x)}{\pi_{\text{old}}(\tau\mid x)}\,e^{C( x)/\beta}
\quad\Longrightarrow\quad
\pi_{\theta}(\tau\mid x) \;=\; e^{C( x)/\beta}\,\pi^{*}(\tau\mid x).
\]
Since both $\pi_{\theta}$ and $\pi^{*}$ are probability distributions and will sum to 1 over all possible trajectories, $e^{C( x) / \beta} = 1$, which implies
$
\pi_{\theta}(\tau |  x) = \pi^{*}(\tau |  x).
$
Therefore, if the induced PL ranking models are identical, then the underlying policy models will also be identical.
Combining the results from both directions will complete the proof.

\end{proof}

\newpage

\subsection{Proof of Lemma \ref{lemma_softmax_invariance_property}}

In this section, we provide the proof for Lemma \ref{lemma_softmax_invariance_property} previously used in the proof of Theorem \ref{prop_appendix_policy_PL_equivalence}.

\begin{lemma}[Softmax Shift-invariance Property]
\label{lemma_softmax_invariance_property}
Let $z, w \in \mathbb{R}^{d},\ d\geq 2$ be two vectors of the same dimension. The softmax outputs are identical, $\text{softmax}(z) = \text{softmax}(w)$, if and only if there exists a scalar constant $C \in \mathbb{R}$ such that the inputs differ by a constant shift, i.e., $w = z + C \cdot \mathbf{1}$, where $\mathbf{1}$ is the vector of ones. The softmax function is defined component-wise as $\text{softmax}( x)_i = \frac{\exp(x_i)}{\sum_{j=1}^K \exp(x_j)}$ for a position $i$.
\end{lemma}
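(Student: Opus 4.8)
The statement to prove is the Softmax Shift-invariance Lemma: $\softmax(z) = \softmax(w)$ for $z,w\in\R^d$, $d\ge 2$, if and only if $w = z + C\cdot\1$ for some scalar $C$. The plan is to handle the two directions separately, with the forward (easy) direction first.

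For the \emph{if} direction, suppose $w = z + C\1$. Then for each coordinate $i$,
\[
\softmax(w)_i = \frac{\exp(z_i + C)}{\sum_{j=1}^d \exp(z_j + C)} = \frac{e^{C}\exp(z_i)}{e^{C}\sum_{j=1}^d \exp(z_j)} = \frac{\exp(z_i)}{\sum_{j=1}^d \exp(z_j)} = \softmax(z)_i,
\]
so the two softmax vectors agree. This is a one-line computation; the factor $e^{C}$ cancels between numerator and denominator.

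For the \emph{only if} direction, assume $\softmax(z) = \softmax(w)$. Write $Z = \sum_j \exp(z_j)$ and $W = \sum_j \exp(w_j)$ for the two normalizing constants (both strictly positive). Equality in coordinate $i$ gives $\exp(z_i)/Z = \exp(w_i)/W$, i.e. $\exp(w_i - z_i) = W/Z$ for every $i$. Since the right-hand side $W/Z$ is a single positive constant not depending on $i$, taking logarithms yields $w_i - z_i = \log(W/Z) =: C$ for all $i$, which is exactly $w = z + C\1$. I would then note for consistency that summing $\exp(z_i + C) = \exp(w_i)$ over $i$ recovers $e^{C} Z = W$, so $C = \log(W/Z)$ is forced — confirming there is a unique such shift.

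There is essentially no hard part here: the lemma is elementary and the main point is simply to record it cleanly so it can be invoked in the proof of Proposition~\ref{prop_appendix_policy_PL_equivalence}, where the scores $s_{\pi_\theta}(\cdot\mid x)$ and $s_{\pi^*}(\cdot\mid x)$ are shown to differ by a quantity independent of $\tau$; the lemma (in its additive-shift form applied to the exponential scoring/ranking map) certifies that such a shift induces the same ranking distribution and, combined with the normalization constraint, that the shift must vanish. The only mild subtlety worth a sentence is that the hypothesis $d \ge 2$ is what makes the statement non-vacuous (for $d=1$ every softmax is trivially the constant $1$, so any shift works and the ``only if'' claim as phrased would fail); I would mention this so the scope of the lemma is unambiguous.
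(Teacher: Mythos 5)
Your proof is correct and follows essentially the same route as the paper's: the forward direction by cancelling the common factor $e^{C}$, and the reverse direction by observing that $\exp(w_i)/\exp(z_i)$ equals the ratio of the two normalizers, hence is independent of $i$, so taking logarithms gives the constant shift. The only inaccuracy is your closing aside about $d=1$: there the ``only if'' direction does not fail but holds trivially (any two scalars differ by a constant), though this is immaterial to the lemma as stated.
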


\begin{proof}

\textit{\textbf{(Forward $\Rightarrow$)}}
Suppose vector $w = z + C \cdot \mathbf{1}$ for some constant $C$. Then, for any component $i$:
\begin{displaymath}
\begin{split}
    \text{softmax}(w)_i & = \frac{\exp(w_i)}{\sum_{j=1}^K \exp(w_j)} = \frac{\exp(z_i+C)}{\sum_{j=1}^K \exp(z_j+C)} \\
    & = \frac{\exp(z_i)e^C}{e^C\sum_{j=1}^K \exp(z_j)} = \frac{\exp(z_i)}{\sum_{j=1}^K \exp(z_j)} = \text{softmax}(z)_i.
\end{split}
\end{displaymath}
Since this holds for all $i$, $\text{softmax}(w) = \text{softmax}(z)$.

\textit{\textbf{(Backward$\Leftarrow$)}}
Suppose $\text{softmax}(z) = \text{softmax}(w)$. Let $S_z = \sum_j \exp(z_j)$ and $S_w = \sum_j \exp(w_j)$. This condition implies $\frac{\exp(z_i)}{S_z} = \frac{\exp(w_i)}{S_w}$ for all $i$. Since $S_z, S_w > 0$, we rearrange to get 
\[
    \exp(w_i) = \exp(z_i) \cdot (S_w/S_z).
\]
Let the positive constant $K = S_w/S_z$. Taking the natural logarithm yields $w_i = \ln(\exp(z_i) K) = z_i + \ln(K)$. Setting the constant $C = \ln(K)$, we have $w_i = z_i + C$ for all $i$. Thus, we will have $w = z + C \cdot \mathbf{1}$, which completes the proof.

\end{proof}

\newpage

\section{Experiment Setups}
\label{app:setup}
We evaluate our model on a comprehensive suite of benchmarks to assess its capabilities across various domains. The datasets used are detailed below, categorized by the primary task they are designed to evaluate.

\textbf{Mathematical and Scientific Reasoning}

\begin{itemize}[leftmargin=*,itemsep=0pt]

\item \textbf{AIME24 and AIME25} \citep{maxwelljia2024aime24,maiai25}: The American Invitational Mathematics Examination (AIME) datasets, specifically from the years 2024 and 2025, consist of challenging mathematical problems from this prestigious high school competition. These datasets are used to evaluate the mathematical reasoning and problem-solving abilities of large language models. The problems cover various domains like algebra, geometry, and number theory and require multi-step reasoning.

\item  \textbf{GPQA / GPQA-Diamond} \citep{rein2024gpqa}: GPQA is a benchmark of 448 graduate-level multiple-choice questions in biology, chemistry, and physics, crafted by domain experts and validated to be “Google-proof.” Expert accuracy is ~65\% (or ~74\% after correction), while skilled non-experts (with unlimited web access) achieve ~34\%. The “Diamond” subset is often used to select the hardest subset of these questions.
\end{itemize}

\textbf{Search-Based Reasoning}

\begin{itemize}[leftmargin=*,itemsep=0pt]
\item \textbf{HotpotQA} \citep{yang2018hotpotqa}: A multi-hop QA dataset built over Wikipedia, containing ~113k questions. Each question often spans multiple documents, and supporting fact annotations are provided to encourage explainable reasoning.

\item  \textbf{2WikiMultiHopQA (2Wiki)} \citep{ho2020constructing}: Similar to HotpotQA, this is a multi-hop question-answering dataset created from Wikipedia. It's designed to evaluate the reasoning steps of a model and includes evidence to support the answers. It has different question types, including comparison, inference, and compositional questions.

\item \textbf{Bamboogle} \citep{press2022selfask}: This dataset is a collection of questions that are designed to be difficult for Google's search engine to answer correctly. It is used to test the compositional reasoning abilities of language models, pushing them beyond simple information retrieval.
\end{itemize}

\textbf{Multimodal Understanding}

\begin{itemize}[leftmargin=*,itemsep=0pt]
\item \textbf{MMSearch} \citep{jiang2024mmsearch}: This is a benchmark for evaluating the multimodal search performance of large language models. The dataset consists of instances that are not present in the training data of current models, so the correct answer can only be found by searching. It evaluates models on tasks like re-querying, reranking, and summarization in a multimodal context.

\begin{figure}[h!]
\begin{tcolorbox}[
  title={Data Example of V-Chart},
  colframe=black!60,             %
  colback=white,              %
  colbacktitle=black!70,      %
  coltitle=white,             %
  fonttitle=\bfseries,
  width=\textwidth,
  left=0mm,
  right=0mm,
  boxrule=1pt,                %
  enhanced jigsaw,
  borderline={0.8pt}{0pt}{black}, %
  top=1mm,
  bottom=1mm,
  arc=2mm                     %
]

\footnotesize

\begin{center}
\begin{minipage}{0.45\textwidth}
  \includegraphics[width=\linewidth]{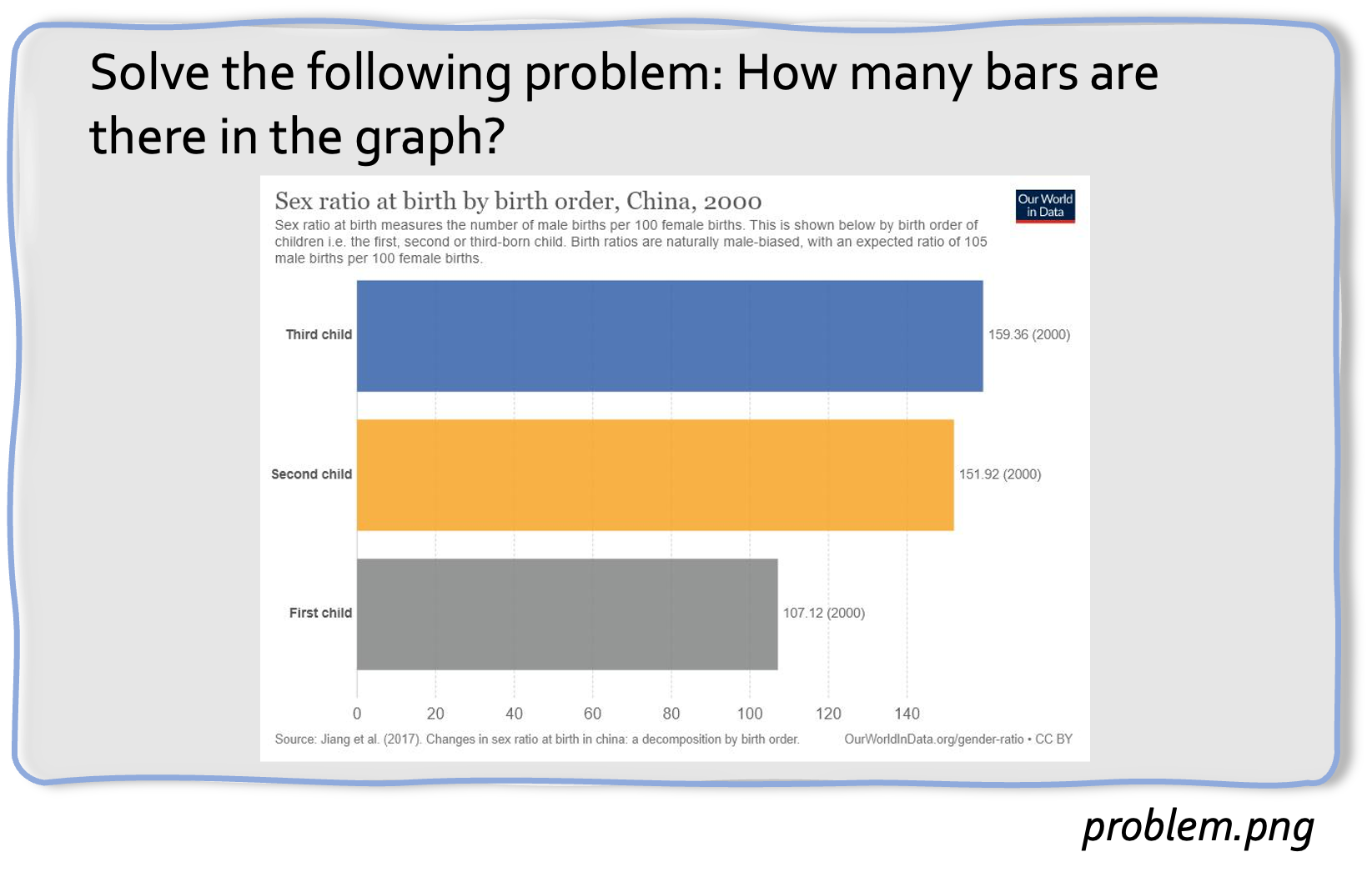}\\[0.4em]
  \textbf{    Answer: 3}
\end{minipage}%
\hfill
\begin{minipage}{0.53\textwidth}{
\underline{\textbf{Input Prompt:}}
You are given a problem and a set of tools. Solve the problem step by step by selecting the proper tools to use during your thinking. \\ \\
Here is the name and usage instructions for the toolset, where you will select the appropriate tools to use. \\
\textbf{Tool 1: Code Interpreter (CI)} ... \\
\textbf{Tool 2: OCR} ... \\
\textbf{Tool 3: SearchAPI} ... \\
... \\
Now, given the input image \texttt{<problem.png>} about the cryptarithmetic puzzle, solve the problem and put the final answer into \texttt{\textbackslash boxed\{\}}
}
\end{minipage}
\end{center}

\end{tcolorbox}
\end{figure}

\item \textbf{V-Chart}
\begin{itemize}[leftmargin=1.5em,itemsep=0pt]
    \item \textbf{ChartGemma} \citep{masry2024chartgemma}: The ChartGemma dataset is used for chart understanding and reasoning. It contains a diverse set of charts and is used to train and evaluate models on their ability to interpret and answer questions about the visual information presented in charts.
\end{itemize}

\item \textbf{V-Math}
\begin{itemize}[leftmargin=1.5em,itemsep=0pt]
    \item  \textbf{GSM8K} \citep{cobbe2021training}: This dataset, which stands for "Grade School Math 8K," contains 8,500 high-quality and linguistically diverse grade school math word problems. It is designed to measure the multi-step reasoning capabilities of language models.
    \item \textbf{AIME24}: Using data from AIME24, we reframe the text data into an image form.
\end{itemize}

\begin{figure}[!h]
\begin{tcolorbox}[
  title={Data Example of V-Math},
  colframe=black!60,             %
  colback=white,              %
  colbacktitle=black!70,      %
  coltitle=white,             %
  fonttitle=\bfseries,
  width=\textwidth,
  left=0mm,
  right=0mm,
  boxrule=1pt,                %
  enhanced jigsaw,
  borderline={0.8pt}{0pt}{black}, %
  top=1mm,
  bottom=1mm,
  arc=2mm                     %
]

\footnotesize

\begin{center}
\begin{minipage}{0.45\textwidth}
  \includegraphics[width=\linewidth]{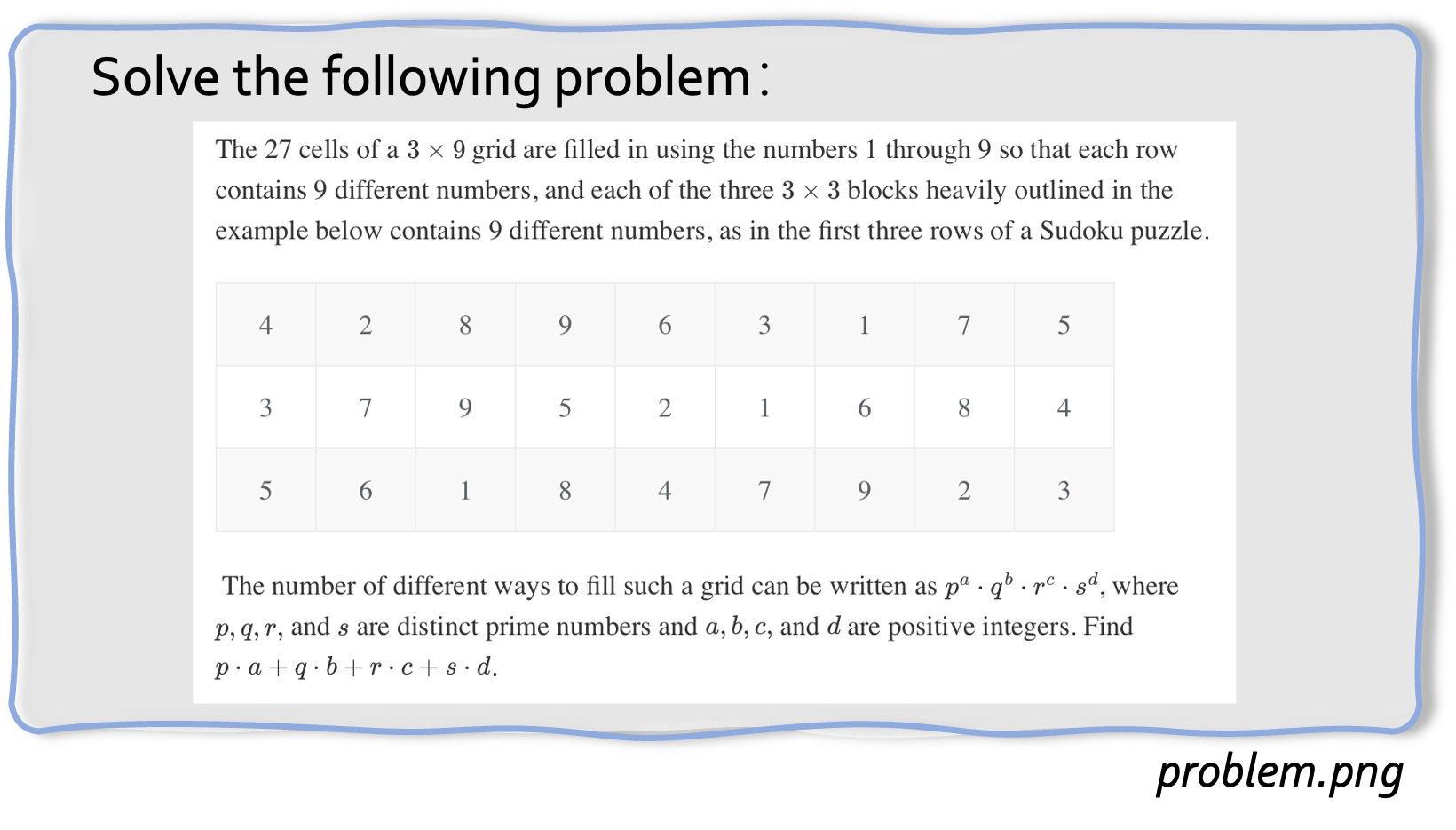}\\[0.4em]
  \textbf{Answer: 81}
\end{minipage}%
\hfill
\begin{minipage}{0.53\textwidth}{
\underline{\textbf{Input Prompt:}}
You are given a problem and a set of tools. Solve the problem step by step by selecting the proper tools to use during your thinking. \\ \\
Here is the name and usage instructions for the toolset, where you will select the appropriate tools to use. \\
\textbf{Tool 1: Code Interpreter (CI)} ... \\
\textbf{Tool 2: OCR} ... \\
\textbf{Tool 3: SearchAPI} ... \\
... \\
Now, given the input image \texttt{<problem.png>} about the cryptarithmetic puzzle, solve the problem and put the final answer into \texttt{\textbackslash boxed\{\}}
}
\end{minipage}
\end{center}

\end{tcolorbox}
\end{figure}

\item \textbf{V-Code}
\begin{itemize}[leftmargin=1.5em,itemsep=0pt]
    \item \textbf{MBPP} \citep{austin2021program}: The "Mostly Basic Python Problems" dataset consists of around 1,000 entry-level Python programming problems. It's used to evaluate the ability of language models to generate code from natural language descriptions.

    \item \textbf{HumanEval} \citep{chen2021evaluating}: A dataset created by OpenAI, HumanEval consists of 164 handwritten programming problems to evaluate the functional correctness of code generated by language models. The problems are designed to be novel and not easily found on the web, to prevent models from simply recalling existing code.
    
    \item \textbf{LiveCodeBench} \citep{jain2024livecodebench}: A dynamic benchmark collecting new programming problems from competitive programming platforms (LeetCode, AtCoder, Codeforces), designed to provide “contamination-free” code evaluation by only testing on problems released after model training cutoffs.
\end{itemize}

\begin{figure}[!h]
\begin{tcolorbox}[
  title={Data Example of V-Code},
  colframe=black!60,             %
  colback=white,              %
  colbacktitle=black!70,      %
  coltitle=white,             %
  fonttitle=\bfseries,
  width=\textwidth,
  left=0mm,
  right=0mm,
  boxrule=1pt,                %
  enhanced jigsaw,
  borderline={0.8pt}{0pt}{black}, %
  top=1mm,
  bottom=1mm,
  arc=2mm                     %
]

\footnotesize

\begin{center}
\begin{minipage}{0.45\textwidth}
  \includegraphics[width=\linewidth]{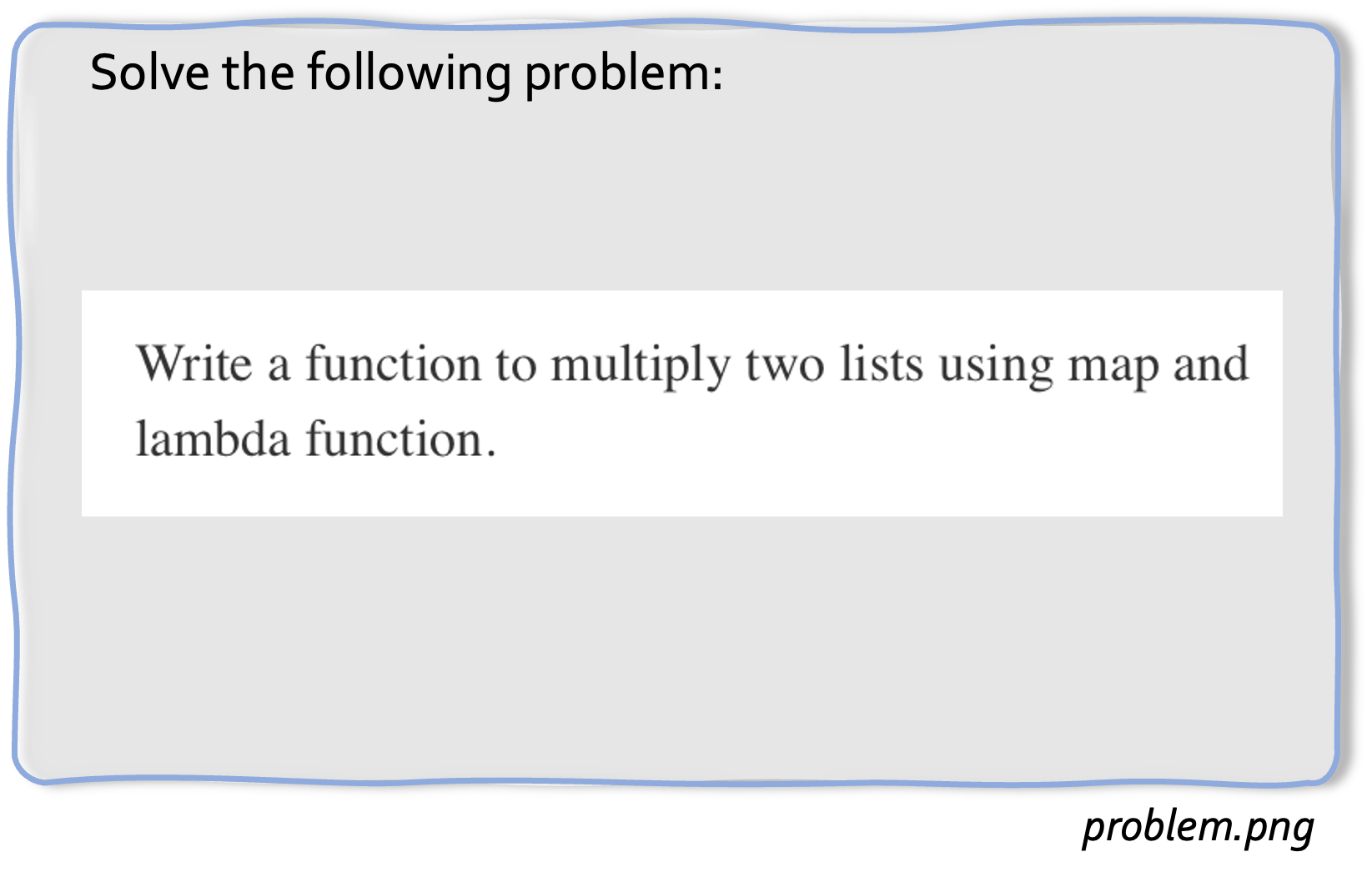}\\[1.em]
  \textbf{Answer:}
  \begin{lstlisting}[language=Python, basicstyle=\ttfamily\footnotesize]
def mul_list(nums1, nums2):
    result = map(lambda x, 
    y: x * y, nums1, nums2)
    return list(result)
  \end{lstlisting}
\end{minipage}%
\hfill
\begin{minipage}{0.53\textwidth}{
\underline{\textbf{Input Prompt:}}
You are given a problem and a set of tools. Solve the problem step by step by selecting the proper tools to use during your thinking. \\ \\
Here is the name and usage instructions for the toolset, where you will select the appropriate tools to use. \\
\textbf{Tool 1: Code Interpreter (CI)} ... \\
\textbf{Tool 2: OCR} ... \\
\textbf{Tool 3: SearchAPI} ... \\
... \\
Now, given the input image \texttt{<problem.png>} about the cryptarithmetic puzzle, solve the problem and put the final answer into \texttt{\textbackslash boxed\{\}}
}
\end{minipage}
\end{center}

\end{tcolorbox}
\end{figure}

\end{itemize}

\newpage
\section{Additional Related Works}
\label{app:related_works}

\textbf{LLM Agents with Agentic RL.} 
A prominent line of research on agentic reasoning in LLMs builds upon reinforcement learning to optimize reasoning behaviors directly. GRPO and its variants~\citep{shao2024deepseekmath,zheng2025group,zhao2025geometric,li2025repo,zhang2025gvpo} marked a turning point, proposing a critic-free formulation that estimates group-relative advantages across sampled responses \citep{liu2025understanding,yu2025dapo,lu2025arpo}. 
Building on these foundations, recent work has explored agentic reasoning with tool use, such as ARTIST~\citep{singh2025agentic} and rStar2~\citep{shang2025rstar2}, which employ outcome-based RL in settings with noisy or uncertain tool feedback. 
In parallel, recent research on self-evolving agents~\citep{fang2025comprehensive,yu2025demystifying, gao2025survey} shifts the view from static agents toward agents designed for continual adaptation.  
This line of work emphasizes mechanisms such as continual learning, structural modification, and autonomous self-improvement to extend an agent’s reasoning capacity over time.

\textbf{Tool Selection in LLM Agents.}
Prior work on enabling LLM agents to interact with external tools has explored both retrieval-based selection and tool-generation paradigms \citep{li2025torlscalingtoolintegratedrl,Qu_2025,recursivemas, li2026intheflow}. Retrieval-based approaches \citep{DuWeiZhang2024_AnyTool}, employ external hierarchical retrievers to map user queries to candidate tools. These retrievers are trained independently of the LLM agent and operate as standalone indexing systems, resulting in a two-stage pipeline in which the LLM does not directly optimize its internal representations for tool selection. Another line of research focuses on tool generation, including CRAFT~\citep{YuanChenWangFungPengJi2024_CRAFT}, ToolMaker~\citep{CaiWangMaChenZhou2023_ToolMakers}, and CREATOR~\citep{QianHanFungQinLiuJi2023_CREATOR}, which synthesize new tools or executable code when suitable tools are unavailable. While effective, these methods emphasize tool construction and often incur substantial execution, validation, and debugging overhead. In contrast, \our{} focuses on the complementary challenge of tool selection by jointly aligning the LLM’s internal representations with tool embeddings, enabling efficient and robust generalization to large and evolving toolsets.

\newpage
\section{Additional Experiments}
\label{app:add_exps}

\subsection{\our vs. Oracle Tool Assignment.}

\begin{table}[!h]
\caption{Comparison with a ground-truth tool assignment baseline where the correct tool is directly provided to the LLM agent before invocation (i.e., no requirements for explicit tool-selection actions).}
\centering
\resizebox{0.6\linewidth}{!}{
\begin{tabular}{lcc}
        \toprule
        \textbf{Qwen2.5-VL-7B}
        & Pre-given (No Tool-Selection) & \textbf{\our}\\
        \midrule[0.35pt]
        AIME24 & 46.7 & 46.7 \\
        HotpotQA & 35.2 & 33.2   \\
        2Wiki& 38.1& 36.5\\
        MMSearch &49.8& 49.3\\
        V-Math& 43.7& 44.3 \\
        \bottomrule
\end{tabular}
}
\vspace{-0.1in}
\label{tab:compare_with_no_selection}
\end{table}

To examine the effectiveness of \our's dynamic tool selection abilities, we further conduct experiments to compare \our with a ground-truth tool assignment baseline where the correct tool is directly provided before each invocation. As shown in Table~\ref{tab:compare_with_no_selection}, we find that \our achieves performance highly competitive with this ``oracle" setup, with only marginal differences across benchmarks (e.g., $49.3$ vs. $49.8$ on MMSearch and $44.3$ vs. $43.7$ on V-Math). This minimized gap demonstrates that \our’s Phase II optimization enables agents to autonomously select tools nearly as effectively as ground-truth assignment, validating the robustness of explicit tool-selection training.

\subsection{Case Study on \our} 
\label{app:case_study}
In Figure \ref{fig:case_study}, we present an example of \our (Qwen3-8B) solving an image-based cryptarithmetic puzzle from V-Math. The agent first invokes OCR to extract textual content from the input image, and then adaptively switches to the Code Interpreter to verify digit assignments and compute the final solution. This case study highlights how \our enables dynamic tool selection for the agent model to seamlessly integrate visual perception with symbolic reasoning.

\begin{figure*}[!ht]
\begin{tcolorbox}[
  title={Case Study on V-Math},
  colframe=black!60,             %
  colback=white,              %
  colbacktitle=black!70,      %
  coltitle=white,             %
  fonttitle=\bfseries,
  width=\textwidth,
  left=0mm,
  right=0mm,
  boxrule=1pt,                %
  enhanced jigsaw,
  borderline={0.8pt}{0pt}{black}, %
  top=1mm,
  bottom=1mm,
  arc=2mm                     %
]

\footnotesize

\begin{center}
\begin{minipage}{0.45\textwidth}
  \includegraphics[width=\linewidth]{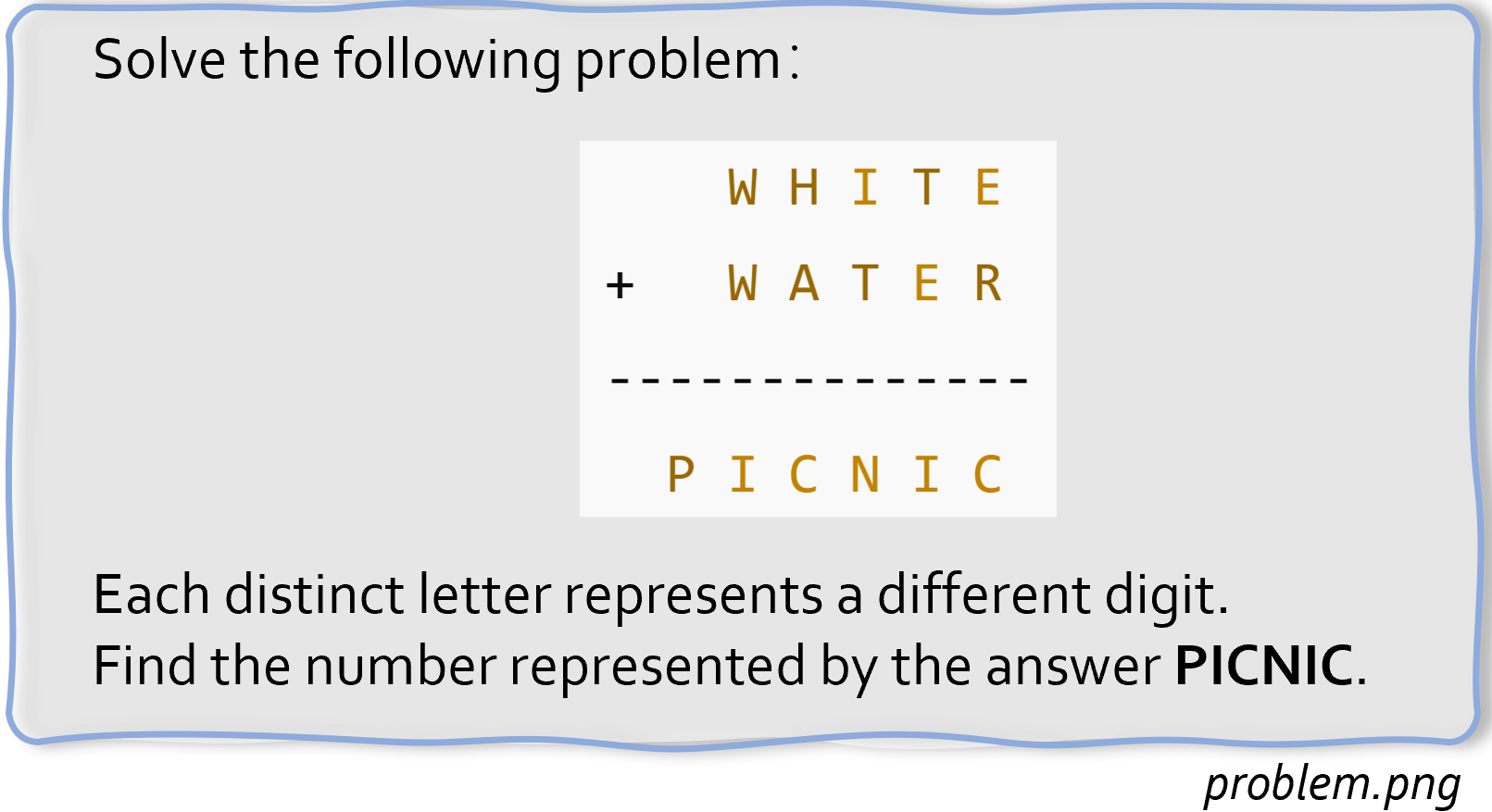}
\end{minipage}%
\hfill
\begin{minipage}{0.53\textwidth}{
\underline{\textbf{Input Prompt:}}
You are given a problem and a set of tools. Solve the problem step by step by selecting the proper tools to use during your thinking. \\ \\
Here is the name and usage instructions for the toolset, where you will select the appropriate tools to use. \\
\textbf{Tool 1: Code Interpreter (CI)} ... \\
\textbf{Tool 2: OCR} ... \\
\textbf{Tool 3: SearchAPI} ... \\
... \\
Now, given the input image \texttt{<problem.png>} about the cryptarithmetic puzzle, solve the problem and put the final answer into \texttt{\textbackslash boxed\{\}}
}
\end{minipage}
\end{center}

\hrule height 1pt

\vspace{10pt}
\underline{\textbf{\our Response:}}
\vspace{5pt}

Okay, let's tackle this cryptarithmetic puzzle step by step. First, given the input image, I need to extract the information about the puzzle for later calculation. \\

\textcolor{violet!70}{
\textbf{$<$select$>$}
We need to extract the information from the input image and then reason programmatically about the cryptarithmetic problem. OCR is suitable because it converts the visual image into structured text for further automated reasoning. \textbf{$<$tool$>$}OCR\textbf{$<$/tool$>$}
\textbf{$<$/select$>$}} \\

\textcolor{red!70!black}{
\textbf{$<$ocr$>$} \\
\texttt{\{"name": "OCR", "action": "extract\_text", "input": "problem.png"\}} \\
\textbf{$<$/ocr$>$}} 

\textcolor{red!70!black}{\textbf{$<$interpreter$>$}
Extracted Text: Solve the following problem ... by the answer PICNIC.
$<$/\textbf{interpreter}$>$} \\

From the texts, each letter represents a unique digit. We need to find the number represented by PICNIC.
First, since adding two 5-digit numbers results in a 6-digit number, the leading digit $P$ must be 1 due to the maximum carryover. So, \textbf{$P = 1$}.
Next, we analyze the columns from right to left. The key steps involve checking carryovers and ensuring unique digits. Let's verify possible values for $W$ and $I$ based on the equation 
$
2W + c_4 = 10 + I.
$ \\

\textcolor{violet!70}{\textbf{$<$select$>$}
We need to systematically check possible values for $W$ and $I$ based on mathematical constraints. This requires iterating through multiple possibilities and performing calculations. A code execution environment is ideal for this task.
\textbf{$<$tool$>$}Code Interpreter (CI)\textbf{$<$/tool$>$}
\textbf{$<$/select$>$}} \\

\textbf{\textcolor{cyan!60!black}{$<$code$>$}}
\begin{tcolorbox}[colback=cyan!5!white, colframe=cyan!60!black, 
                  boxrule=0.6pt, arc=2mm, left=1mm, right=1mm, 
                  top=1mm, bottom=1mm]
\begin{lstlisting}[language=Python, basicstyle=\ttfamily\footnotesize]
possible_w_i=[]
for c4 in [0,1]:
  for W in range(2,10):
    I=2*W+c4-10
    if 0<=I<=9 and I!=1:
      possible_w_i.append((W,I,c4))
print(possible_w_i)
\end{lstlisting}
\end{tcolorbox}
\textbf{\textcolor{cyan!60!black}{$<$/code$>$}}

\textcolor{cyan!60!black}{\textbf{$<$interpreter$>$}\\
$[(5, 0, 0), (6, 2, 0), (7, 4, 0), (8, 6, 0), (9, 8, 0), (6, 3, 1), (7, 5, 1), (8, 7, 1), (9, 9, 1)]$\\
\textbf{$<$/interpreter$>$}} \\

From the output, valid $(W, I)$ pairs are filtered. We then explore each case. After several attempts, we find a valid solution when $W = 8$, $I = 6$, $c_4 = 0$, and other constraints align. Thus, WHITE is 83,642 and WATER is 85,427. Adding together, the PICNIC is 169,069. All digits are unique and satisfy the constraints. Thus, the answer is \textbf{\texttt{\textbackslash boxed\{169069\}}}.

\end{tcolorbox}
\caption{Case Study of \our{} on V-Math.}
\label{fig:case_study}
\end{figure*}

\end{document}